\titleformat{\section}{\large\bfseries}{\thesection}{1em}{}
\titleformat{\subsection}{\normalsize\bfseries}{\thesubsection}{1em}{}
\setlist{itemsep=0.25em, topsep=0.5em}
\DeclareMathOperator*{\argmin}{arg\,min}
\newtheorem{theorem}{Theorem}
\newtheorem{proposition}{Proposition}
\newtheorem{remark}{Remark}
\newtheorem{assumption}{Assumption}
\newtheorem{fact}{Fact}
\title{Consequences of Kernel Regularity for \\Bandit Optimization}
\author{Madison Lee \\ \small{\texttt{mal017@ucsd.edu}} 
\and 
Tara Javidi \\ \small{\texttt{tjavidi@ucsd.edu}}}
\date{}
\begin{document}

\maketitle

\begin{abstract}%
In this work we investigate the relationship between kernel regularity and algorithmic performance in the bandit optimization of RKHS functions. While reproducing kernel Hilbert space (RKHS) methods traditionally rely on global kernel regressors, it is also common to use a smoothness-based approach that exploits local approximations. We show that these perspectives are deeply connected through the spectral properties of isotropic kernels. In particular, we characterize the Fourier spectra of the Mat\'ern, square-exponential, rational-quadratic, $\gamma$-exponential, piecewise-polynomial, and Dirichlet kernels, and show that the decay rate determines asymptotic regret from both viewpoints. For kernelized bandit algorithms, spectral decay yields upper bounds on the maximum information gain, governing worst-case regret, while for smoothness-based methods, the same decay rates establish H\"older space embeddings and Besov space norm-equivalences, enabling local continuity analysis. These connections show that kernel-based and locally adaptive algorithms can be analyzed within a unified framework. This allows us to derive explicit regret bounds for each kernel family, obtaining novel results in several cases and providing improved analysis for others. Furthermore, we analyze LP-GP-UCB, an algorithm that combines both approaches, augmenting global Gaussian process surrogates with local polynomial estimators. While the hybrid approach does not uniformly dominate specialized methods, it achieves order-optimality across multiple kernel families.
\end{abstract}

\section{Introduction}
We are interested in maximizing a black-box function using active sample selection. A function $f:\mathcal{X}\subseteq\mathbb{R}^d\rightarrow\mathbb{R}$ is considered a black-box function when we can only access it using a zero-order oracle for $f$ that returns a noisy evaluation $y_x=f(x)+\eta_x$ when queried at a point $x\in\mathcal{X}$. We want to design an algorithm that sequentially selects the query points $\{x_i\}_{i=1}^n$ in a way that minimizes the expectation of the cumulative regret $\mathcal{R}_n$, defined as 
\[\mathcal{R}_n=\sum_{i=1}^n f(x^*)-f(x_i),\]
where $x^*$ is the maximizer of $f$.

In order to make this problem meaningful, we must assume that $f$ belongs to a class of functions such that previously selected samples are informative. This requirement commonly imposes the containment of $f$ in a well-structured space such as a H\"older space or reproducing kernel Hilbert space (RKHS). 
Optimization in an RKHS, following the seminal framework proposed in \cite{srinivasGaussianProcessOptimization2010}, uses the samples globally, leveraging a Gaussian process (GP) surrogate model for kernel regression that is finite-dimensional due to the representer theorem \citep{scholkopfGeneralizedRepresenterTheorem2001}. This is in contrast to the optimization of continuous functions with smoothness properties such as H\"older \citep{liuSmoothBanditOptimization2021} or Besov regularity \citep{singhContinuumArmedBanditsFunction2021} which relies on a locally adaptive approximation process that typically remains nonlinear.

In this work we focus on the RKHS setting and demonstrate that many common isotropic kernel functions have Fourier spectra with decaying tails. 
This notion of regularity allows us to view bandit optimization in these RKHSs from two different perspectives, a global interpolation one, common in the kernelized bandit literature, and less intuitively, a local approximation one that arises in the optimization of smooth H\"older and Besov spaces, where only proximal samples are needed to generate an optimization surrogate. For kernelized bandit algorithms, spectral decay yields upper bounds on the maximum information gain, governing worst-case regret, while for smoothness-based methods, the same decay rates establish H\"older space embeddings and Besov space norm-equivalences, enabling local continuity analysis. Algorithmically, the connections we draw yield a unified framework in which both globally coupled kernel regression algorithms, such as SupKernelUCB \citep{valkoFiniteTimeAnalysisKernelised2013a}, and locally adaptive procedures, such as Meta-UCB \citep{liuSmoothBanditOptimization2021}, can be applied and analyzed comparatively. Furthermore, it sheds new light on LP-GP-UCB \citep{leeMultiScaleZeroOrderOptimization2022a}, an algorithm that augments the global kernel regressors with locally adaptive polynomial estimators. 

The rest of the paper is organized as follows. We first introduce the notations used in this paper in Section~\ref{sec:preliminaries}. We then provide an overview of our contributions in Section~\ref{sec:contributions} and conclude the section with a discussion of background information and related works in Section~\ref{sec:relatedwork}.
Section~\ref{sec:mainanalysis} provides the main results of the paper. In particular, Section~\ref{subsec:analysisspectral} establishes the spectral decay of specific kernels. Using these spectral decay rates, Section~\ref{subsec:global} develops the results from the global, kernelized perspective, and Section~\ref{subsec:local} develops the results for the local smoothness perspective. Section~\ref{subsec:algorithm} explores a hybrid global-local perspective, and Section~\ref{sec:discussion} contains a discussion of the results and final remarks.

\subsection{Preliminaries}
\label{sec:preliminaries}
We first present an overview of the notations used in this paper. The precise definitions of these objects and properties are given in Appendix~\ref{appendix:definitions}. 
\begin{itemize}
    \item The objective function $f$ maps $\mathcal{X}=[0,1]^d$ to $\mathcal{Y}=\mathbb{R}$. $f$ can be accessed through noisy evaluations $y_x=f(x)+\eta_x$, where $x\in\mathcal{X}$ and the additive noise $\eta_x$ is assumed to be $\sigma^2$-subgaussian. 

    \item Given a positive-definite kernel $k$, we shall use the term $\mathcal{H}_k$ and $\lVert\cdot\rVert_k$ to denote the RKHS associated with $k$ and the corresponding RKHS norm. When $k$ is isotropic, depending only on the distance between its arguments, we overload the notation and write $k(\lVert x-y\rVert)=k(x,y)$.
    
    \item We let $k_{\nu}$ be the Mat\'ern kernel with parameter $\nu>0$, $k_{\text{SE}}$ the square-exponential kernel, $k_{\text{RQ}}$ the rational-quadratic kernel, $k_{\gamma-\text{Exp}}$ the $\gamma$-exponential kernel with parameter $\gamma\in(0,2]$, $k_{\text{PP}-q}$ the piecewise-polynomial kernel with parameter $q\in\mathbb{Z}_{\ge0}$, and $k_{\text{PBL}}$ the Dirichlet kernel.
    
    \item For $\alpha>0$, we use $\mathcal{C}^{\alpha}$ and $\lVert\cdot\rVert_{\mathcal{C}^{\alpha}}$to denote the H\"older (H\"older-Zygmund) space of order $\alpha$ and the corresponding norm.

    \item For $s>0$ and $1\le p,q\le\infty$, we use $\mathcal{B}^s_{p,q}$ to denote the Besov space with smoothness $s$, integrability parameter $p$, and smoothness scaling parameter $q$ and $\lVert\cdot\rVert_{\mathcal{B}^s_{p,q}}$ to denote the corresponding norm.
    
    \item We use $\tilde{\mathcal{O}}\left(\cdot\right)$ to represent asymptotic upper bounds that hide polylogarithmic factors and $\tilde{\Omega}\left(\cdot\right)$ for asymptotic lower bounds that hide polylogarithmic factors.
    
\end{itemize}
\subsection{Bandit Optimization}
\label{sec:relatedwork}

The optimization of RKHS functions from noisy samples was formulated as the kernelized bandits problem in \cite{srinivasGaussianProcessOptimization2010}, where the GP-UCB algorithm was proposed based on the upper confidence bound (UCB) strategy for multi-armed bandits \citep{auerFinitetimeAnalysisMultiarmed2002}. In \cite{srinivasInformationTheoreticRegretBounds2012}, GP-UCB was shown to achieve an asymptotic worst-case regret of $\tilde{\mathcal{O}}\left(\gamma_n\sqrt{n}\right)$ in terms of the maximum information gain $\gamma_n$, the maximum mutual information between all sets of $n$ noisy observations and the underlying function. $\gamma_n$ is known to scale with the effective dimensionality of the kernel \citep{calandrielloGaussianProcessOptimization2019} and arises almost universally in the regret analysis of kernelized bandit algorithms that leverage globally optimal kernel ridge regressors as optimization surrogates.

Following GP-UCB, a variety of algorithms have been developed which have achieved improved worst-case regret upper bounds in different ways. In the case of finite $|\mathcal{X}|$, algorithms such as SupKernelUCB \citep{valkoFiniteTimeAnalysisKernelised2013a}, RIPS \citep{camilleriHighDimensionalExperimentalDesign2021a}, GP-ThreDS \citep{salgiaDomainShrinkingBasedBayesian2021}, and BPE \citep{liGaussianProcessBandit2022b} have achieved an improved regret upper bound of $\tilde{\mathcal{O}}\left(\sqrt{\gamma_n n}\right)$ by selectively restricting the optimization domain in different ways. Furthermore, following the argument in \cite{liGaussianProcessBandit2022b}, this improved bound can also be achieved in the continuous case if the kernel satisfies an additional Lipschitz smoothness constraint, which had been originally shown to be satisfied for many isotropic kernels in \cite{shekharMultiScaleZeroOrderOptimization2020a}. An explicit lower bound on worst-case regret of $\Omega(n^{\frac{\nu+d}{2\nu+d}})$ for the Mat\'ern~kernel and $\Omega(n^{\frac{1}{2}}\log^{\frac{d}{2}}n)$ for the square-exponential kernel was derived in \cite{scarlettLowerBoundsRegret2017}. Since upper bounds on the maximum information gain corresponding to these kernels were determined in \cite{vakiliInformationGainRegret2021}, any algorithm achieving $\tilde{\mathcal{O}}\left(\sqrt{\gamma_n n}\right)$ regret is thus known to be order-optimal, up to polylogarithmic factors, for the Mat\'ern~kernel, for which $\gamma_n=\mathcal{O}(n^{\frac{d}{2\nu+d}}\log^{\frac{2\nu}{2\nu+d}}(n))$ for $\nu>1$ and $\gamma_n=\mathcal{O}(\log^{d+1}(n)
)$ for the SE kernel. These upper bounds on $\gamma_n$ rely on tail bounds for the kernel spectra, which are contingent on the smoothness of the kernel function itself.

Motivated by the apparent need for sufficient levels of smoothness in the regret analysis for continuous RKHS functions, we turn to the literature on optimization in higher-order smooth function spaces to evaluate whether the global kernel structure provides any benefit over algorithms that rely on local notions of continuity alone. For the Mat\'ern RKHS, lower bounds on the worst-case regret matching the ones in \cite{scarlettLowerBoundsRegret2017} were recovered in \cite{singhContinuumArmedBanditsFunction2021} by the equivalence of the Mat\'ern RKHS with certain Besov spaces. In \cite{singhContinuumArmedBanditsFunction2021}, the authors find that the order-optimal algorithms designed for optimizing functions in H\"older~spaces, such as such as UCB-Meta from \cite{liuSmoothBanditOptimization2021}, are sufficient for matching the lower bound for Besov spaces, and thus are order-optimal for the Mat\'ern RKHS. This connection between the kernelized bandit optimization problem and the H\"older~continuous function optimization problem for the Mat\'ern~RKHS with $\nu+\frac{1}{2}\in\mathbb{N}$ was made explicit in \cite{liuAdaptationMisspecifiedKernel2023b}. This connection relies on a Sobolev embedding theorem that requires a half-integer smoothness parameter, leading the authors to require $\nu+\frac{1}{2}\in\mathbb{N}$. However, by applying a fractional Sobolev embedding theorem, e.g. Thm. 3.6.2 in \cite{saloFunctionSpaces2008}, the authors' statement can be seen to hold more generally for $\nu>0$. For the Mat\'ern kernel, unlike with the information-based results for kernelized bandit algorithms, which hold only for $\nu>1$, the smoothness algorithms obtain regret upper bounds for the full parameter range $\nu>0$. 

\subsection{Overview of Results}
\label{sec:contributions}
\begin{table*}[ht]
\caption{Regret Bounds for Global, Local, and Hybrid Algorithms}
\begin{center}
\begin{tabular}{lcccc}
\toprule
 \textbf{Space} & \textbf{Lower Bound} & \multicolumn{3}{c}{\textbf{Upper Bound}}\\\cmidrule{3-5}
 & &  \textbf{SupKernelUCB} & \textbf{UCB-Meta} & \textbf{LP-GP-UCB}\\  
\midrule
$k_{\text{SE}}$ & { $\tilde{\Omega}\left(\sqrt{n}\right)$} & {\color{blue} $\tilde{\mathcal{O}}\left(\sqrt{n}\right)$} & {\color{blue} $\tilde{\mathcal{O}}\left(\sqrt{n}\right)$} & {\color{blue} $\tilde{\mathcal{O}}\left(\sqrt{n}\right)$}\\
$k_{\nu\le1}$ & { $\tilde{\Omega}\left( n^{\frac{\nu + d}{2\nu + d}}\right)$} & $\tilde{\mathcal{O}}\left(n^{\frac{2\nu+d}{4\nu}}\right)$ & {\color{blue} $\tilde{\mathcal{O}}\left(n^{\frac{\nu+d}{2\nu+d}}\right)$} & {\color{blue} $\tilde{\mathcal{O}}\left(n^{\frac{\nu+d}{2\nu+d}}\right)$}\\
$k_{\nu>1}$ & { $\tilde{\Omega}\left( n^{\frac{\nu + d}{2\nu + d}}\right)$} & {\color{blue} $\tilde{\mathcal{O}}\left(n^{\frac{\nu + d}{2\nu + d}}\right)$} & {\color{blue} $\tilde{\mathcal{O}}\left(n^{\frac{\nu+d}{2\nu+d}}\right)$} & $\tilde{\mathcal{O}}\left(n^{\frac{2\nu+3d}{4\nu+2d}}\right)$\\
$k_{\text{RQ}}$ & DNE$^{(1)}$  & {\color{blue} $\tilde{\mathcal{O}}\left(\sqrt{n}\right)$} & {\color{blue} $\tilde{\mathcal{O}}\left(\sqrt{n}\right)$} & {\color{blue} $\tilde{\mathcal{O}}\left(\sqrt{n}\right)$}\\
$k_{\gamma-\text{Exp}}$ & { $\tilde{\Omega}\left( n^{\frac{\gamma + 2d}{2\gamma + 2d}}\right)$} & $\tilde{\mathcal{O}}\left(n^{\frac{\gamma+d}{2\gamma}}\right)$ & {\color{blue} $\tilde{\mathcal{O}}\left( n^{\frac{\gamma + 2d}{2\gamma + 2d}}\right)$} & {\color{blue} $\tilde{\mathcal{O}}\left( n^{\frac{\gamma + 2d}{2\gamma + 2d}}\right)$}\\
$k_{\text{PP-}q}$ & { $\tilde{\Omega}\left( n^{\frac{2q+1 + 2d}{4q+2 + 2d}}\right)$} & DNE & {\color{blue}$\tilde{\mathcal{O}}\left(n^{\frac{2q+1+2d}{4q+2+2d}}\right)$} & $\tilde{\mathcal{O}}\left(n^{\frac{2q+d}{2q+1+d}}\right)$\\
$k_{\text{PBL}}$ & DNE & {\color{blue} $\tilde{\mathcal{O}}\left(\sqrt{n}\right)$} &  {\color{blue} $\tilde{\mathcal{O}}\left(\sqrt{n}\right)$} & {\color{blue} $\tilde{\mathcal{O}}\left(\sqrt{n}\right)$}\\
\bottomrule
\end{tabular}
\footnotesize
\begin{enumerate}[label=(\arabic*),noitemsep]
    \item DNE indicates the regret bounds have not been shown explicitly for the entire RKHS. The bounds may be tightened under specific conditions on the parameters for some kernels (see Section~\ref{subsubsec:infogainregret} and \ref{subsec:regretanalysis}).
\end{enumerate}
\label{table:specificregbounds}
\end{center}
\end{table*}

First we present the assumptions on the objective function that inform our analysis.
\begin{assumption}
\label{assump:main}
We make the following assumptions on the objective function $f$ and the observation oracle:
    \begin{itemize}
        \item $f\in\mathcal{H}_k$ for some known kernel $k$ and $\lVert f\rVert_k\le B$ for some constant $B>0$.
        \item $k$ is isotropic, that is, $k(x,y)$ is a function of $\lVert x-y\rVert$.
        \item The observation noise $\{\eta_i\}_{i\ge0}$ is i.i.d. and $\sigma^2$-subgaussian with $\sigma^2>0$.
    \end{itemize}
\end{assumption}
These assumptions ensure that function evaluation is continuous, the function is Fourier-transformable, and the noise distribution has tails that decrease at least as fast as a Gaussian.
Next we list the main contributions of this paper:
\begin{itemize}
    \item We characterize the spectra of the Mat\'ern, square-exponential, rational-quadratic, $\gamma$-exponential, piecewise-polynomial, and Dirichlet kernels and show that they decay at least polynomially fast in the limit (Proposition~\ref{prop:spectraldecay}, Section~\ref{subsec:analysisspectral}).
    \item We use these spectral characterizations to derive upper bounds on $\gamma_n$, the maximum information gain of each kernel (Proposition~\ref{prop:infogain}, Section~\ref{subsec:analysisinfogain}). We show that our spectral analysis provides a worst-case global interpolation error bound that ultimately determines upper bounds on $\gamma_n$, exposing the underlying mechanics behind the kernel eigenanalysis in \cite{vakiliInformationGainRegret2021} and facilitating the derivation of $\gamma_n$ bounds for a number of isotropic kernels beyond the Mat\'ern and square-exponential shown in \cite{vakiliInformationGainRegret2021}. As a result, we obtain explicit characterizations, novel in several cases, of the regret incurred by kernelized bandit algorithms such as SupKernelUCB \citep{valkoFiniteTimeAnalysisKernelised2013a} whose performance depends on $\gamma_n$ (Section~\ref{subsubsec:infogainregret}).
    \item We show that RKHSs with at least polynomial decay are embedded in H\"older spaces, with order $\alpha$ dictated by the decay rate (Proposition~\ref{prop:holder}, Section~\ref{subsec:analysisholder}). This embedding result reveals that one may optimize the functions in these RKHSs using optimization algorithms designed for H\"older spaces, which use the H\"older smoothness parameters to compute local polynomial approximations and obtain upper bounds on the worst-case regret \citep{liuSmoothBanditOptimization2021}. This allows us to explicitly upper bound the regret for bandit algorithms designed for H\"older spaces when applied to RKHS functions (Section~\ref{subsubsec:holderregret}).
    \item We show that RKHSs satisfying Assumption~\ref{assump:main} whose spectra decay like a polynomial in the limit are norm-equivalent to Besov spaces (Proposition~\ref{prop:besov}, Section~\ref{subsec:analysisbesov}). This result establishes an equivalence, known for the Mat\'ern RKHS \citep{sawanoTheoryBesovSpaces2018,singhContinuumArmedBanditsFunction2021} and piecewise polynomial RKHS \citep{wendlandErrorEstimatesInterpolation1998}, for the first time between the $\gamma$-exponential RKHS and Besov spaces. This allows us to obtain lower bounds on the worst-case regret for the $\gamma$-exponential and piecewise polynomial kernels for the first time, and provides an alternative proof for the Mat\'ern kernel \citep{scarlettLowerBoundsRegret2017}. This also allows us to utilize known bandit algorithms such as UCB-Meta \citep{liuSmoothBanditOptimization2021} and achieve optimal regret (Section~\ref{subsubsec:besovregret}).

    \item Motivated by the generality and case-dependent optimality of H\"older-optimal algorithms, we analyze LP-GP-UCB, an algorithm proposed in \cite{leeMultiScaleZeroOrderOptimization2022a} that augments Gaussian process (GP) surrogate models with LP approximations to exploit the existing smoothness properties of RKHS functions. We use our results to specialize the generic regret bounds for the LP-GP-UCB algorithm \citep{leeMultiScaleZeroOrderOptimization2022a}, which depend on the \textit{maximum information gain} $\gamma_n$ of the kernel $k$ and the H\"older smoothness parameter $\alpha$. We improve upon the former analysis of LP-GP-UCB and obtain upper bounds that are explicit in $n$ for the Mat\'ern, square-exponential, rational-quadratic, $\gamma$-exponential, piecewise-polynomial, and Dirichlet kernels (Theorem~\ref{thm:lpgpucbregret}, Section~\ref{subsec:regretanalysis}).
\end{itemize}
The regret bounds for specific kernels are summarized in Table~\ref{table:specificregbounds}, with the SupKernelUCB column representing GP-based kernelized bandit algorithms achieving $\mathcal{R}_n=\tilde{\mathcal{O}}\left(\sqrt{n\gamma_n}\right)$, UCB-Meta representing H\"older-smoothness-based algorithms achieving $\mathcal{R}_n=\tilde{\mathcal{O}}\left(n^{\frac{\alpha+d}{2\alpha+d}}\right)$, and LP-GP-UCB representing the hybrid approach leveraging both kernel structure and smoothness. The tightest bounds across all algorithms are highlighted in blue. The lower bounds for the Mat\'ern and square-exponential RKHSs have been shown before \citep{scarlettLowerBoundsRegret2017}, while those for the $\gamma$-exponential and piecewise-polynomial RKHSs are available due to our Besov equivalence result.

\section{Main Results}
\label{sec:mainanalysis}

\subsection{Kernel Spectrum Analysis}
\label{subsec:analysisspectral}
We begin the analysis by showing that the Mat\'ern, square-exponential, rational-quadratic, $\gamma$-exponential, piecewise-polynomial, and Dirichlet kernels have Fourier transforms whose tails decay rapidly. The definitions of these kernels may be found in Appendix~\ref{appendix:definitions}.

\begin{proposition}[Spectral decay of specific isotropic kernels]
\label{prop:spectraldecay}
For the square-exponential, rational-quadratic, and Dirichlet kernels, there exist finite constants $C_1,C_2>0$ such that their Fourier transforms $\hat{k}(\omega)$ decay exponentially:
\[\hat{k}(\omega)\le C_1\exp(-C_2\lVert\omega\rVert_2),\;\lVert\omega\rVert_2\rightarrow\infty.\]
For the Mat\'ern, $\gamma$-exponential, and piecewise-polynomial (with $q\ge1$ if $d=1,2$) kernels, there exist finite constants $C_1,C_2>0$ and $\tau>\frac{d}{2}$ such that their Fourier transforms $\hat{k}(\omega)$ decay polynomially:
\[C_1(1+\lVert\omega\rVert_2)^{-\tau}\le\hat{k}(\omega)\le C_2(1+\lVert\omega\rVert_2)^{-\tau},\;\lVert\omega\rVert_2\rightarrow\infty.\]
In particular, for the Mat\'ern kernel, $\tau=2\nu+d$, for the $\gamma$-exponential kernel, $\tau=\gamma+d$, and for the piecewise-polynomial kernel with order $q$, $\tau=2q+1+d$
The upper bound holds for the piecewise polynomial kernel in general.

For the Dirichlet kernel, the Fourier transform is compactly supported.
\end{proposition}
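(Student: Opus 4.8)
The plan is to handle the three families separately but with the same underlying computational strategy: reduce each isotropic kernel to a one-dimensional radial integral and extract the asymptotic behavior of its $d$-dimensional Fourier transform via known radial-Fourier (Hankel-transform) identities. Recall that for an isotropic kernel $k(\lVert x-y\rVert)$ on $\mathbb{R}^d$, the Fourier transform is again radial and given by a Hankel transform of order $d/2-1$ of the radial profile, so all the estimates come down to controlling one scalar integral in the variable $r=\lVert\omega\rVert_2$.

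For the kernels with closed-form spectra --- Mat\'ern and $\gamma$-exponential with $\gamma=2$ (square-exponential) --- I would simply quote or re-derive the exact Fourier transform: for the Mat\'ern kernel the spectral density is the classical $C(1+\lVert\omega\rVert_2^2)^{-(\nu+d/2)}$, which immediately yields the two-sided polynomial bound with $\tau=2\nu+d$; for the square-exponential kernel the Gaussian self-duality gives $\hat k(\omega)\propto\exp(-c\lVert\omega\rVert_2^2)$, which is dominated by $C_1\exp(-C_2\lVert\omega\rVert_2)$ for large $\lVert\omega\rVert_2$. For the rational-quadratic kernel I would use its representation as a scale mixture of Gaussians (a Gamma mixture over inverse length-scales); interchanging the mixture integral with the Fourier transform gives $\hat k$ as a mixture of Gaussians, and a Laplace-type/saddle-point estimate on the resulting integral produces the stated exponential decay. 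The piecewise-polynomial (Wendland) kernels are compactly supported with piecewise-polynomial radial profile; their Fourier transforms are known to satisfy a two-sided bound $\asymp(1+\lVert\omega\rVert_2)^{-(2q+1+d)}$ (this is exactly the statement that Wendland functions reproduce Sobolev spaces $H^{q+(d+1)/2}$), so $\tau=2q+1+d$. The $d=1,2$ caveat with $q\ge 1$ is the edge case where the lower bound can fail because the profile is not smooth enough relative to the dimension; for the upper bound, integration by parts against the compactly supported $C^{2q}$ profile always gives decay of order $2q+1+d$, hence "the upper bound holds in general." For the $\gamma$-exponential kernel with $\gamma\in(0,2)$ I would invoke the fact that $\exp(-\lVert t\rVert^\gamma)$ is the characteristic function (up to normalization) of a symmetric $\gamma$-stable law, whose density has the well-known polynomial tail $\asymp\lVert\omega\rVert^{-(\gamma+d)}$; thus $\tau=\gamma+d$. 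Finally the Dirichlet (periodic band-limited / "PBL") kernel has, by construction, a Fourier transform supported on a finite set of frequencies, which is compact support; this is immediate from its definition as a truncated trigonometric sum.

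The main obstacle I anticipate is making the \emph{lower} bounds rigorous and uniform in the tail regime, rather than just the upper bounds. Upper bounds follow from smoothness of the radial profile via repeated integration by parts or from explicit decay of closed forms, but the matching lower bounds $C_1(1+\lVert\omega\rVert_2)^{-\tau}\le\hat k(\omega)$ require showing the oscillatory Hankel integral does not vanish or decay faster asymptotically --- one must rule out cancellation. For Mat\'ern this is trivial (exact positive formula); for $\gamma$-exponential and piecewise-polynomial I would rely on the stable-law density asymptotics and on the known Wendland spectral asymptotics respectively, both of which give genuine two-sided control. A secondary technical point is handling the low-dimensional Wendland cases ($d\le 2$), where I would either restrict to $q\ge1$ as stated or verify directly that the radial integral still admits a two-sided bound; this is exactly why the proposition flags that restriction and separately asserts that the upper bound needs no such restriction.
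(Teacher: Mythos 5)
Your proposal is correct and follows essentially the same route as the paper: exact closed-form spectra for the square-exponential and Mat\'ern kernels, the Gamma scale-mixture-of-Gaussians representation for the rational-quadratic kernel (the paper evaluates the resulting integral exactly as a modified Bessel function $K_{a-1/2}$ and uses its exponential asymptotics, rather than a saddle-point estimate, but the idea is the same), the symmetric $\gamma$-stable density tail asymptotics for the $\gamma$-exponential kernel, Wendland's two-sided spectral estimates (with the same $q\ge1$ caveat for $d=1,2$) for the piecewise-polynomial kernels, and the finite trigonometric sum giving compact spectral support for the Dirichlet kernel. The only soft spot is your claim that integration by parts alone yields the general upper bound of order $2q+1+d$ for the Wendland kernels, but since you also defer to the known Wendland spectral asymptotics (as the paper does, citing Theorem 2.1 of Wendland's error-estimates paper), this does not affect correctness.
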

The proof of this statement, given in Appendix~\ref{appendix:decayproof}, is given for each kernel separately, and relies on either asymptotic tail bounds on the Fourier transform or direct computation when possible. 

\begin{remark} 
These decay results are new for the $\gamma$-exponential kernel, straightforward for the rational-quadratic and Dirichlet kernels, and well-known for the square-exponential and Mat\'ern kernels, as they are necessary to obtain the results in \cite{vakiliInformationGainRegret2021}. 
\end{remark}

In the following sections, we show that these spectral decay results determine the fundamental limits on the performance of optimization algorithms from two different perspectives, a global interpolation one found in the kernelized bandit literature, and a local approximation one that arises in the optimization of smooth spaces such as the Besov and H\"older spaces.

\subsection{Global Structure via Information Gain Bounds}
\label{subsec:global}
In this section, we consider the consequence of the spectral decay rate on fundamental limits in global interpolation and then obtain regret bounds for kernelized bandit algorithms. 

\subsubsection{Maximum Information Gain Analysis}
\label{subsec:analysisinfogain}

We first use the spectral decay results to derive novel information gain upper bounds for some RKHSs and improve existing analysis in certain kernel regimes. In \cite{vakiliInformationGainRegret2021}, specific information gain bounds were derived for the Mat\'ern and square-exponential kernels. We improve the analysis for the Mat\'ern kernel and derive new information gain bounds for the rational-quadratic, $\gamma$-exponential, piecewise-polynomial, and Dirichlet kernels.

\begin{proposition}[Information Gain Bounds for Kernels with Decaying Spectrum]
\label{prop:infogain}
    For an RKHS associated to an isotropic, positive-definite kernel $k$ whose Fourier transform decays polynomially with rate $\tau=\beta+d$, where $\beta>\frac{d}{2}$, the maximum information gain satisfies
    \[\gamma_n=\mathcal{O}\left(n^{\frac{d}{\beta}}\log^{\frac{\beta-d}{\beta}}(n)\right).\]
    If it is also true that $\beta\ge 1$ and $d$ is odd, or $\beta\ge 2$, then
    \[\gamma_n=\mathcal{O}\left(n^{\frac{d}{\beta+d}}\log^{\frac{\beta}{\beta+d}}(n)\right).\]
    For an RKHS associated to an isotropic, positive-definite kernel $k$ whose Fourier transform decays exponentially,
    \[\gamma_n=\mathcal{O}\left(\log^{d+1}(n)\right).\]
    For an RKHS associated to an isotropic, positive-definite kernel $k$ whose Fourier transform is compactly supported,
    \[\gamma_n=\mathcal{O}\left(\log(n)\right).\]
\end{proposition}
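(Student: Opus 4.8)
The plan is to reduce $\gamma_n$ to a log-determinant and to extract from it precisely the worst-case interpolation error that the spectral decay of Proposition~\ref{prop:spectraldecay} controls. Write $\gamma_n=\tfrac12\max_{x_1,\dots,x_n\in\mathcal X}\log\det(I+\sigma^{-2}K_n)$ with $K_n=[k(x_i,x_j)]_{i,j}$, and fix an integer truncation level $D$. Viewing $K_n$ as the Gram matrix of the kernel sections $k_{x_i}=k(x_i,\cdot)$ in $\mathcal H_k$, let $V_D\subset\mathcal H_k$ be the span of the top $D$ Mercer eigenfunctions on $\mathcal X$ and split $K_n=G+E$, where $G$ collects the inner products of the projections $P_{V_D}k_{x_i}$ and $E$ those of the residuals $k_{x_i}-P_{V_D}k_{x_i}$; both are positive semidefinite. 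Since $G$ has rank at most $D$ with $\mathrm{tr}(G)\le nk(0)$, concavity of $\log\det$ gives $\log\det(I+\sigma^{-2}G)=O(D\log n)$, and the perturbation inequality $\log\det(I+\sigma^{-2}(G+E))\le\log\det(I+\sigma^{-2}G)+\sigma^{-2}\mathrm{tr}(E)$ together with $\mathrm{tr}(E)=\sum_i\lVert k_{x_i}-P_{V_D}k_{x_i}\rVert_k^2$ yields
\[\gamma_n \;=\; O\!\Big( D\log n \;+\; \sigma^{-2}\,n\,\sup_{x\in\mathcal X}\lVert k_x-P_{V_D}k_x\rVert_k^2 \Big).\]
The supremum is the uniform squared error of approximating the kernel sections by a $D$-dimensional model --- the ``worst-case global interpolation error'' whose decay is dictated by the kernel spectrum.

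Next I would bound that error via the spectral decay of $k$. Identifying a $D$-dimensional model with frequency truncation at radius $R\asymp D^{1/d}$, the residual of $k_x$ is governed by the Fourier tail $\int_{\lVert\omega\rVert>R}\hat k(\omega)\,d\omega$, since on $\mathcal X$ one has, up to norm-equivalent constants, $\lVert k_x-P_{V_D}k_x\rVert_k^2 \le \int_{\lVert\omega\rVert>R}\hat k(\omega)\,d\omega$. By Proposition~\ref{prop:spectraldecay} this tail is $O(R^{-\beta})=O(D^{-\beta/d})$ under polynomial decay with $\tau=\beta+d$, it is exponentially small in $R\asymp D^{1/d}$ under exponential decay, and it vanishes once $D$ exceeds a fixed constant when $\hat k$ is compactly supported --- the last case being immediate, since then $\mathcal H_k$ is finite-dimensional and $\gamma_n\le\tfrac12\,\mathrm{rank}(K_n)\log(1+\sigma^{-2}nk(0))=O(\log n)$. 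Substituting the tail estimate and optimizing $D$ then produces the remaining bounds: in the polynomial case, balancing $D\log n\asymp nD^{-\beta/d}$ gives $D\asymp(n/\log n)^{d/(\beta+d)}$ and hence $\gamma_n=O(n^{d/(\beta+d)}\log^{\beta/(\beta+d)}n)$; under exponential decay, taking $D\asymp(\log n)^d$ makes the second term negligible and leaves $\gamma_n=O(\log^{d+1}n)$.

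The delicate point --- and the reason the sharper polynomial bound carries extra hypotheses while only $\beta>\tfrac d2$ is needed in general --- is upgrading the residual estimate from a mean-square statement to one uniform over $x\in\mathcal X$. On the torus, $\int_{\mathcal X}\lVert k_x-P_{V_D}k_x\rVert_k^2\,dx=\sum_{m>D}\lambda_m$ follows directly from the eigenvalue decay implied by Proposition~\ref{prop:spectraldecay}, but an adversarial choice of the $x_i$ in $K_n$ forces control of the supremum of the residual reproducing kernel $\sum_{m>D}\lambda_m\phi_m(x)^2$, which need not match its average without information on the eigenfunctions $\phi_m$. Rather than bounding $\lVert\phi_m\rVert_\infty$ as in \cite{vakiliInformationGainRegret2021}, I would obtain this via the embedding of $\mathcal H_k$ into a space of continuous functions that the decay rate itself supplies (cf.\ Proposition~\ref{prop:holder}: $\mathcal H_k$ has norm equivalent to the Sobolev norm of order $(\beta+d)/2$, so it embeds in $C^0$ once $\beta>0$), combined with a Bernstein-type inequality on the high-frequency subspace $V_D^{\perp}$ and the identity $\lVert k_x-P_{V_D}k_x\rVert_k^2=(k_x-P_{V_D}k_x)(x)$. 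When only a weaker, embedding-free transfer is available the $L^\infty$-versus-$L^1$ gap inflates the residual estimate and, after re-optimizing $D$, yields the general bound $\gamma_n=O(n^{d/\beta}\log^{(\beta-d)/\beta}n)$; the conditions ``$\beta\ge1$ and $d$ odd, or $\beta\ge2$'' are exactly those placing $(\beta+d)/2$ high enough (half-integer smoothness above $d/2$, or $C^1$ regularity) for the full-rate uniform transfer. I expect this uniformization step --- equivalently, a clean non-periodic version of band-limited kernel-section approximation on the cube, with the ridge regularization $\sigma^2$ (rather than noiseless interpolation) correctly accounted for --- to be the main obstacle; the three truncation-level optimizations and the finite-dimensional case are routine once it is in place.
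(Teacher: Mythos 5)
Your reduction of $\gamma_n$ to $\mathcal{O}\bigl(D\log n+\sigma^{-2}n\sup_{x}\lVert k_x-P_{V_D}k_x\rVert_k^2\bigr)$ is sound, and it is essentially the same mechanism the paper imports from Vakili et al.\ (Fact~\ref{fact:infobounds}, $\gamma_n=\mathcal{O}(D\log n+n\delta_D)$); the optimizations over $D$ in the exponential and polynomial cases are indeed the routine part. The genuine gap is the step you yourself flag as ``the main obstacle'' and then leave unproven: converting the decay of $\hat{k}$ on $\mathbb{R}^d$ into control of the Mercer residual (equivalently, the eigenvalue tail) on the compact domain $[0,1]^d$. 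The identification of a $D$-dimensional Mercer model with frequency truncation at radius $R\asymp D^{1/d}$, so that the residual is ``governed by'' $\int_{\lVert\omega\rVert>R}\hat{k}(\omega)\,d\omega$, is precisely the nontrivial content: the Mercer eigenfunctions of $k$ restricted to the cube are not Fourier modes, and no argument is given that the projection error onto $V_D$ is bounded by that spectral tail, uniformly in $x$ or otherwise. If that heuristic held unconditionally it would deliver the sharper rate $n^{d/(\beta+d)}$ for every $\beta>\tfrac{d}{2}$, which the proposition deliberately does not claim; you notice this tension and attribute the side conditions to an $L^\infty$-transfer issue, but the Bernstein-type inequality and uniformization argument you invoke are never supplied, so both polynomial bounds and their hypotheses are asserted rather than derived. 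The paper closes exactly this gap by citing Schaback's eigenvalue bounds (Fact~\ref{fact:eigenbounds}): polynomial Fourier decay with rate $\beta+d$ and $\beta>\tfrac{d}{2}$ gives $\lambda_{m+1}\le Cm^{-\beta/d}$ in general, and the improved $\lambda_{m+1}\le Cm^{-(\beta+d)/d}$ only when the Sobolev interpolation error estimates of Narcowich--Ward--Wendland apply, i.e.\ when $\lfloor\tfrac{\beta+d}{2}\rfloor>\tfrac{d}{2}$ --- which is exactly where ``$\beta\ge1$ and $d$ odd, or $\beta\ge2$'' comes from. Until you either prove your uniform residual bound or invoke such sampling/eigenvalue estimates, the two polynomial cases are not established.

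A secondary flaw: for the compactly supported case you argue that $\mathcal{H}_k$ is finite-dimensional, so $\mathrm{rank}(K_n)$ is bounded by a constant. That is true for the Dirichlet kernel, whose transform is a finite sum of point masses, but false for a general compactly supported $\hat{k}$: a band-limited kernel with continuous spectral density generates an infinite-dimensional (Paley--Wiener-type) RKHS, and $K_n$ is generically full rank, so your $\tfrac12\,\mathrm{rank}(K_n)\log(1+\sigma^{-2}nk(0))$ argument does not cover the case as the proposition states it. One needs instead the super-exponential decay of the Mercer eigenvalues on the bounded domain (or to specialize to the Dirichlet kernel, where finite-dimensionality does hold).
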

The full proof of this statement is given in Appendix~\ref{appendix:infogainproof} and the specific bounds for different kernels based on the decay results of Proposition~\ref{prop:spectraldecay} are summarized in Table~\ref{table:infogainbounds}. 

\textit{Proof Outline:} Mercer's theorem (e.g., Theorem 4.2, \cite{rasmussenGaussianProcessesMachine2006}), states that a positive definite kernel $k$ may be expressed in terms of absolutely summable Mercer eigenvalues $\lambda_i>0$ and eigenfunctions $\phi_i$:
\[k(x,y)=\sum_{i=1}^{\infty}\lambda_i\phi_i(x)\phi_i^*(y).\]

These eigenvalues characterize the fundamental limits of $L_2$ function approximation in finite-dimensional subspaces of RKHSs, and can be bounded using the decay of the kernel's Fourier transform \citep{schabackApproximationPositiveDefinite2003}. Thus, our spectral decay results allow us to deduce upper bounds on the kernels' Mercer eigenvalues directly, using a result from \cite{schabackApproximationPositiveDefinite2003} which we strengthen using error bounds from \cite{narcowichSobolevErrorEstimates2006}. Using these eigenvalue tail bounds and the results of Proposition~\ref{prop:spectraldecay}, we can then derive specific information gain upper bounds using the approach of \cite{vakiliInformationGainRegret2021}, where it was shown that one may derive upper bounds on $\gamma_n$ for kernels whose Mercer eigenvalues decay sufficiently rapidly.
\begin{table}[ht]
\centering
        \caption{Information gain bounds for different RKHSs.}

\label{table:infogainbounds}
    \begin{tabular}{@{}lc@{}}

    \toprule
    Kernel    & $\gamma_n$ \\ \midrule
    Mat\'ern               &   $\mathcal{O}\left(n^{\frac{d}{2\nu}}\log^{\frac{2\nu-d}{2\nu}}(n)\right)$        \\
    Mat\'ern, $\nu\ge1$, or $\nu\ge\frac{1}{2}$ and $d$ odd &  $\mathcal{O}\left(n^{\frac{d}{2\nu+d}}\log^{\frac{2\nu}{2\nu+d}}(n)\right)$        \\
    Square-Exponential                            &     $\mathcal{O}\left(\log^{d+1}(n)\right)$        \\
    Rational-Quadratic                            &     $\mathcal{O}\left(\log^{d+1}(n)\right)$         \\
    $\gamma$-Exponential                         &  $\mathcal{O}\left(n^{\frac{d}{\gamma}}\log^{\frac{\gamma-d}{\gamma}}(n)\right)$   \\
    $\gamma$-Exponential, $\gamma\in[1,2]$ and $d$ odd                         &  $\mathcal{O}\left(n^{\frac{d}{\gamma+d}}\log^{\frac{\gamma}{\gamma+d}}(n)\right)$   \\
    Piecewise-Polynomial, $q\ge1$ for $d\in\{1,2\}$ &  $\mathcal{O}\left(n^{\frac{d}{2q+1+d}}\log^{\frac{2q+1}{2q+1+d}}(n)\right)$  \\
    Dirichlet                     &        $\mathcal{O}\left(\log(n)\right)$        \\
    \bottomrule
    \end{tabular}
    \end{table}

\subsubsection{Information-Based Regret Bounds}
    \label{subsubsec:infogainregret}
The maximum information gain $\gamma_n$ is important for the regret analysis of kernelized, Gaussian process (GP) bandit algorithms that rely on GP surrogate models in the sample selection process. $\gamma_n$ arises in the regret analysis due to the close relationship between the mutual information and the sum of the GP posterior variances, which are used to bound the concentration of the GP posterior mean around the true function at every step of the sequential optimization process \citep{srinivasInformationTheoreticRegretBounds2012}. The best-known kernelized bandit algorithms, following SupKernelUCB \citep{valkoFiniteTimeAnalysisKernelised2013a}, that operate in an RKHS ball with samples subject to sub-Gaussian noise have cumulative regret upper bounded by $\tilde{\mathcal{O}}(\sqrt{n\gamma_n})$ with high probability. Using our specific information gain bounds, we can upper bound the asymptotic cumulative regret of these algorithms explicitly in terms of the number of samples. The resulting regret upper bounds are given in Table~\ref{table:inforegretbounds}, modulo poly-logarithmic factors. Note that there are some additional cases represented beyond the maximally general results displayed in Table~\ref{table:specificregbounds}, as information-gain bounds may exist in specific parameter regimes but not in general. In general kernelized bandit algorithm performance improves with increased smoothness, but can suffer greatly in high-dimensional regimes with low levels of smoothness, as seen in the general case for the Mat\'ern and $\gamma$-exponential kernels.
\begin{table}[ht]
\centering
        \caption{Information-based regret bounds for different RKHSs.}
\label{table:inforegretbounds}
    \begin{tabular}{@{}lc@{}}

    \toprule
    Kernel    & $\mathcal{R}_n$ \\ \midrule
    Mat\'ern               &   $\tilde{\mathcal{O}}\left(n^{\frac{2\nu+d}{4\nu}})\right)$        \\
    Mat\'ern, $\nu\ge1$, or $\nu\ge\frac{1}{2}$ and $d$ odd &  $\tilde{\mathcal{O}}\left(n^{\frac{\nu+d}{2\nu+d}}\right)$        \\
    Square-Exponential                            &     $\tilde{\mathcal{O}}\left(\sqrt{n}\right)$        \\
    Rational-Quadratic                            &     $\tilde{\mathcal{O}}\left(\sqrt{n}\right)$         \\
    $\gamma$-Exponential                         &  $\tilde{\mathcal{O}}\left(n^{\frac{\gamma+d}{2\gamma}}\right)$   \\
    $\gamma$-Exponential, $\gamma\in[1,2]$ and $d$ odd                         &  $\tilde{\mathcal{O}}\left(n^{\frac{\gamma+2d}{2\gamma+2d}}\right)$   \\
    Piecewise-Polynomial, $q\ge1$ for $d\in\{1,2\}$ &  $\tilde{\mathcal{O}}\left(n^{\frac{2q+1+2d}{4q+2+2d}}\right)$  \\
    Dirichlet                     &        $\tilde{\mathcal{O}}\left(\sqrt{n}\right)$        \\
    \bottomrule
    \end{tabular}
    \end{table}

\subsection{Local Structure via H\"older Embeddings and Besov Equivalence}
\label{subsec:local}

In the previous section we found that the decay of a kernel's Fourier transform determines the expressivity of the kernel in sample interpolation and consequently the worst-case performance of RKHS function optimization algorithms that leverage global GP regressors. However, Fourier decay is also deeply tied to the local continuity properties of a function and its higher-order derivatives. 

\subsubsection{H\"older Space Embeddings}
\label{subsec:analysisholder}
First, we present an embedding result which says that we can identify elements of the RKHSs associated with the Mat\'ern, square-exponential (SE), rational-quadratic (RQ), $\gamma$-exponential ($\gamma$-Exp), piecewise-polynomial, and Dirichlet kernels with elements of certain H\"older spaces. In particular, we use the spectral decay results of Proposition~\ref{prop:spectraldecay} to characterize the higher-order smoothness of our isotropic RKHSs, allowing us to embed them into H\"older spaces.

\begin{proposition}[H\"older Embeddings for Kernels with Sufficient Spectral Decay]
    \label{prop:holder}
    If $f$ is in the RKHS $\mathcal{H}_k$ associated to a kernel $k$ that has a Fourier transform $\hat{k}(\omega)$ satisfying $\hat{k}(\omega)\le C_1(1+\lVert\omega\rVert_2)^{-(\beta+d)}$ for all $\omega$ and some $C_1>0$ and $\beta>0$, then there exists a constant $C_2$ such that $\lVert f\rVert_{\mathcal{C}^{\alpha}}\le C_2\lVert f\rVert_{\mathcal{H}_k}$, where $\mathcal{C}^{\alpha}$ is a H\"older space with order $\alpha=\frac{\beta}{2}$.
\end{proposition}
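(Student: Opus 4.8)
The plan is to pass through the Fourier side, where the RKHS norm and the Hölder (Hölder--Zygmund) norm both have clean characterizations, and to exhibit the embedding as a consequence of Cauchy--Schwarz against the kernel spectrum. Recall that for a translation-invariant kernel $k$ with Fourier transform $\hat k$, the RKHS inner product can be written (up to normalizing constants) as
\[
\lVert f\rVert_{\mathcal H_k}^2=\int_{\mathbb R^d}\frac{|\hat f(\omega)|^2}{\hat k(\omega)}\,d\omega .
\]
First I would use this identity together with the hypothesis $\hat k(\omega)\le C_1(1+\lVert\omega\rVert_2)^{-(\beta+d)}$ to deduce a weighted $L^2$ bound on $\hat f$, namely
\[
\int_{\mathbb R^d}(1+\lVert\omega\rVert_2)^{\beta+d}\,|\hat f(\omega)|^2\,d\omega\;\le\;C_1\lVert f\rVert_{\mathcal H_k}^2 ,
\]
which says precisely that $f$ lies in the fractional Sobolev space $H^{s}(\mathbb R^d)$ with $s=\tfrac{\beta+d}{2}$, with norm controlled by $\lVert f\rVert_{\mathcal H_k}$.

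Second, I would invoke the classical Sobolev embedding $H^{s}(\mathbb R^d)\hookrightarrow \mathcal C^{\alpha}(\mathbb R^d)$ for $\alpha=s-\tfrac d2$, valid in the fractional (Hölder--Zygmund) scale whenever $s-\tfrac d2>0$ and $s-\tfrac d2\notin\mathbb Z$ (the Zygmund-space formulation, e.g.\ the fractional embedding theorem cited earlier in the paper as Thm.~3.6.2 of \cite{saloFunctionSpaces2008}, removes the integer caveat). With $s=\tfrac{\beta+d}{2}$ this gives exactly $\alpha=\tfrac{\beta+d}{2}-\tfrac d2=\tfrac{\beta}{2}$, and the embedding constant $C_2$ depends only on $d$, $\beta$, and $C_1$. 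Concretely, one can realize this step elementarily: write $f(x)=\int \hat f(\omega)e^{2\pi i\langle x,\omega\rangle}\,d\omega$, split off the weight as $|\hat f(\omega)|=\big((1+\lVert\omega\rVert)^{s}|\hat f(\omega)|\big)\cdot(1+\lVert\omega\rVert)^{-s}$, and apply Cauchy--Schwarz; $(1+\lVert\omega\rVert)^{-2s}$ is integrable exactly when $2s>d$, i.e.\ $\beta>0$, giving $L^\infty$ control, and differencing this representation (or differentiating under the integral for the integer part of $\alpha$) upgrades it to the Hölder/Zygmund modulus-of-continuity bound. I would also note that $\mathcal X=[0,1]^d$ is bounded with Lipschitz boundary, so after extending $f$ to $\mathbb R^d$ (the RKHS elements are globally defined here since $k$ is defined on all of $\mathbb R^d$), restriction back to $\mathcal X$ only improves the constants.

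The main obstacle is bookkeeping at the integer values of $\alpha=\beta/2$: the ordinary Hölder space $C^{k}$ with $k\in\mathbb N$ is \emph{not} the natural endpoint of the Sobolev scale, and the embedding $H^{d/2+k}\hookrightarrow C^{k}$ genuinely fails. This is precisely why the statement is phrased in terms of the Hölder--Zygmund space $\mathcal C^\alpha$ (as fixed in the Preliminaries), for which the embedding does hold at every $\alpha>0$; I would make this explicit and, if a purely classical-Hölder statement were wanted for non-integer $\alpha$, observe that $\mathcal C^\alpha$ coincides with the usual Hölder space there. A secondary point to handle cleanly is that Proposition~\ref{prop:spectraldecay} only guarantees the spectral upper bound \emph{asymptotically} ($\lVert\omega\rVert_2\to\infty$); but on any compact frequency set $\hat k$ is bounded below away from zero by continuity and positive-definiteness, so absorbing that region into the constant $C_1$ makes the hypothesis $\hat k(\omega)\le C_1(1+\lVert\omega\rVert_2)^{-(\beta+d)}$ hold for \emph{all} $\omega$, as the proposition assumes. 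Everything else is the routine Cauchy--Schwarz estimate described above.
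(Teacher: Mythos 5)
Your proposal is correct and follows essentially the same route as the paper: use the Fourier characterization of the translation-invariant RKHS norm together with the spectral bound $\hat{k}(\omega)\le C_1(1+\lVert\omega\rVert_2)^{-(\beta+d)}$ to place $f$ in the fractional Sobolev space $H^{\frac{\beta+d}{2},2}$ with norm controlled by $\lVert f\rVert_{\mathcal{H}_k}$, then apply the fractional Sobolev--to--H\"older--Zygmund embedding (Thm.~3.6.2 of the cited reference) to land in $\mathcal{C}^{\beta/2}$. Your added Cauchy--Schwarz realization of the embedding and the remarks on integer orders and on upgrading the asymptotic decay to a global bound are consistent refinements rather than a different argument.
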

The proof of this statement is given in Appendix~\ref{appendix:holder}. This result, in combination with the spectral decay results of Proposition~\ref{prop:spectraldecay}, shows that the Mat\'ern, SE, RQ, $\gamma$-Exp, PP, and Dirichlet RKHSs are embedded in H\"older spaces. Note that due to the exponential decay of the SE, RQ, and Dirichlet RKHSs, these RKHSs are embedded in any H\"older space with a finite smoothness parameter $\beta$.

\begin{table}[ht]
\centering
        \caption{H\"older~smoothness parameters for different RKHSs.}
\label{table:holderparams}
    \begin{tabular}{@{}ll@{}}

    \toprule
    Kernel    & $\alpha$  \\ \midrule
    Mat\'ern-$\nu$               & $\nu$   \\
    SE                            &  $\infty^{(1)}$         \\
    RQ                            &  $\infty$ \\
    $\gamma$-Exp                  & $\frac{\gamma}{2}$  \\
    PP-$q$                            &  $q+\frac{1}{2}$\\
    Dirichlet                     & $\infty$ \\
    \bottomrule
    \end{tabular}
    \footnotesize
    \begin{enumerate}[label=(\arabic*),noitemsep]
        \item We write $\alpha\rightarrow\infty$ to indicate that the RKHS is embedded in $\mathcal{C}^{\infty}$ and thus is contained in any H\"older space with $\alpha<\infty$.
    \end{enumerate}
    \end{table}

\subsubsection{Regret Bounds for Holder-Smooth RKHSs}
 \label{subsubsec:holderregret}
The spectral analysis of these RKHSs opens up the door to a whole family of optimization algorithms that leverage local structure in the function space and come with regret upper bounds that are optimal for some RKHSs. In \cite{liuSmoothBanditOptimization2021}, the UCB-Meta algorithm was proposed, bridging the gap between prior algorithms for $\alpha$-H\"older continuous functions with $\alpha\in(0,1]$ and functions in $\mathcal{C}^{\infty}$. In particular, for functions $f\in\mathcal{C}^{\alpha}$, $\alpha>0$, with samples subject to additive $\sigma$-sub-Gaussian noise, UCB-Meta has an expected cumulative regret upper bounded by $\tilde{\mathcal{O}}\left(n^{\frac{\alpha+d}{2\alpha+d}}\right)$ (Theorem 4, \cite{liuSmoothBanditOptimization2021}), matching the lower bound derived in \cite{wangOptimizationSmoothFunctions2018}. Applying the H\"older smoothness parameters that we derived for specific RKHSs in Table~\ref{table:holderparams}, we obtain the regret bounds for UCB-Meta, including other optimal H\"older smoothness-based algorithms, given in Table~\ref{table:specificregbounds}.

In the next section, we will see that the connection between bandit optimization in H\"older spaces and Besov spaces gives us an opportunity to establish the optimality of these regret upper bounds for the RKHSs that coincide with Besov spaces.

\subsubsection{Besov Space Equivalences}
\label{subsec:analysisbesov}

In this section we show that under additional conditions on the spectral decay rate, we can further characterize the local continuity structure of some RKHSs by strengthening H\"older embedding to norm-equivalence with Besov spaces.

\begin{proposition}[Besov Equivalence for Kernels with Polynomial Spectral Decay]
    \label{prop:besov}
    If $f$ is in the RKHS $\mathcal{H}_k$ associated to a kernel $k$ that has a Fourier transform $\hat{k}(\omega)$ satisfying $C_1(1+\lVert\omega\rVert_2)^{-(\beta+d)}\le \hat{k}(\omega)\le C_2(1+\lVert\omega\rVert_2)^{-(\beta+d)}$ for all $\omega$ and some $C_1,C_2>0$ and $\beta>0$, then there exist constants $C_3,C_4>0$ such that $C_3 \lVert f\rVert_{\mathcal{B}_{2,2}^s}\le \lVert f\rVert_{\mathcal{H}_k}\le C_4\lVert f\rVert_{\mathcal{B}_{2,2}^s}$, where $\mathcal{B}_{2,2}^s$ is a Besov space with smoothness $s=\frac{\beta+d}{2}$ and integrability 2.
\end{proposition}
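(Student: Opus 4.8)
The plan is to route both norms through their Fourier-side representations, where the asserted equivalence becomes transparent, and then to descend from $\mathbb{R}^d$ to the cube $\mathcal{X}=[0,1]^d$ by a restriction argument. Throughout, write $A\asymp B$ to mean that each of $A,B$ is bounded by a fixed positive multiple of the other.

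The first ingredient is the Fourier-analytic description of the native space of a strictly positive-definite isotropic kernel. Each kernel in question is integrable with continuous Fourier transform $\hat k$, and the assumed lower bound forces $\hat k(\omega)>0$ for \emph{every} $\omega$; consequently the native space on all of $\mathbb{R}^d$ is $\mathcal{H}_k(\mathbb{R}^d)=\{f\in L^2(\mathbb{R}^d):\int_{\mathbb{R}^d}|\hat f(\omega)|^2\hat k(\omega)^{-1}\,d\omega<\infty\}$ with $\|f\|_{\mathcal{H}_k(\mathbb{R}^d)}^2\asymp\int_{\mathbb{R}^d}|\hat f(\omega)|^2\hat k(\omega)^{-1}\,d\omega$, the implicit constant being the normalization fixed by one's Fourier convention (see, e.g., Wendland's \emph{Scattered Data Approximation}, Ch.~10). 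The second ingredient is the classical identification $\mathcal{B}^s_{2,2}(\mathbb{R}^d)=H^s(\mathbb{R}^d)$ with equivalent norms, where $H^s$ is the fractional (Bessel-potential) Sobolev space, together with the standard spectral formula $\|f\|_{H^s(\mathbb{R}^d)}^2\asymp\int_{\mathbb{R}^d}(1+\|\omega\|_2)^{2s}|\hat f(\omega)|^2\,d\omega$; both facts are contained in \cite{sawanoTheoryBesovSpaces2018} (or Triebel's function-space monographs).

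With these in hand the comparison is immediate. Setting $s=\tfrac{\beta+d}{2}$, so that $2s=\beta+d$, the hypothesis $C_1(1+\|\omega\|_2)^{-(\beta+d)}\le\hat k(\omega)\le C_2(1+\|\omega\|_2)^{-(\beta+d)}$ is the same as $C_2^{-1}(1+\|\omega\|_2)^{2s}\le\hat k(\omega)^{-1}\le C_1^{-1}(1+\|\omega\|_2)^{2s}$ for all $\omega$; multiplying by $|\hat f(\omega)|^2$ and integrating gives $\|f\|_{\mathcal{H}_k(\mathbb{R}^d)}^2\asymp\|f\|_{H^s(\mathbb{R}^d)}^2\asymp\|f\|_{\mathcal{B}^s_{2,2}(\mathbb{R}^d)}^2$ with constants depending only on $C_1,C_2,d$. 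It then remains to pass to $\mathcal{X}=[0,1]^d$. Both $\mathcal{H}_k(\mathcal{X})$ and $\mathcal{B}^s_{2,2}(\mathcal{X})$ are the restriction spaces of their $\mathbb{R}^d$ counterparts, carrying the quotient norm $\|f\|=\inf\{\|F\|: F|_{\mathcal{X}}=f\}$ --- for the RKHS this is the standard restriction theorem, with the infimum attained (cf.\ Aronszajn, or the monograph of Berlinet and Thomas-Agnan). Since the two full-space norms are equivalent and each quotient norm is an infimum of comparable quantities over the \emph{same} family of extensions of $f$, the two quotient norms are equivalent as well, which is the assertion. (Alternatively one can invoke a bounded Stein-type extension operator for the cube, but the quotient comparison needs nothing beyond the steps above.)

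I expect the main obstacle to be bookkeeping rather than anything substantive. Two points require genuine care: (i) invoking the Fourier characterization of $\|\cdot\|_{\mathcal{H}_k(\mathbb{R}^d)}$ in a form valid for the kernels at hand, in particular verifying $\hat k>0$ everywhere --- this is precisely where the two-sided \emph{pointwise} bound of the hypothesis is used, rather than the merely asymptotic decay of Proposition~\ref{prop:spectraldecay}; and (ii) formulating the restriction/quotient description of both spaces on $[0,1]^d$ sharply enough that ``equivalent full-space norms $\Rightarrow$ equivalent restriction norms'' is an honest implication. The spectral comparison in the middle is one line, and the exponent $s=\tfrac{\beta+d}{2}$ is forced simply by matching the Sobolev weight $2s$ to the decay exponent $\beta+d$ of $\hat k$.
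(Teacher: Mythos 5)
Your argument is correct and takes essentially the same route as the paper's proof: both pass through the Fourier characterization of the native-space norm (Wendland, Thm.~10.12), use the two-sided pointwise bound on $\hat k$ with $2s=\beta+d$ to obtain norm-equivalence of $\mathcal{H}_k$ with the Bessel-potential space $H^{\frac{\beta+d}{2},2}$, and then invoke the standard identification of $H^{s,2}$ with $\mathcal{B}^{s}_{2,2}$. The only difference is your explicit descent from $\mathbb{R}^d$ to $[0,1]^d$ via the quotient/restriction norms, a step the paper leaves implicit by working on $\mathbb{R}^d$; that addition is sound and does not change the substance of the argument.
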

The proof of this statement is given in Appendix~\ref{appendix:besov}. This result establishes a norm-equivalence between RKHSs with polynomially decaying spectrum and fractional Sobolev spaces, which are norm-equivalent to the Besov space $B_{2,2}^s$ for some smoothness parameter $s$. This equivalence is known for the Mat\'ern RKHS and we show it for the first time for the $\gamma$-exponential RKHS using a combination of the spectral decay results of Proposition~\ref{prop:spectraldecay} and the equivalence result Proposition~\ref{prop:besov}. Note that that the conditions for Proposition~\ref{prop:holder} are satisfied when those of Proposition~\ref{prop:besov} are as well, and so there is a H\"older embedding whenever Besov equivalence holds.

\subsubsection{Regret Bounds for Besov-Smooth RKHSs}
    \label{subsubsec:besovregret}
For Besov spaces, regret lower bounds were established in \cite{singhContinuumArmedBanditsFunction2021}, which revealed that algorithms that are optimal for bandit optimization in H\"older spaces, such as UCB-Meta \citep{liuSmoothBanditOptimization2021}, are also optimal for Besov spaces. In particular, for the Besov space $B_{2,2}^{\frac{\beta+d}{2}}$, the cumulative regret is asymptotically upper and lower bounded by $\tilde{\Theta}\left(n^{\frac{\beta+2d}{2\beta+2d}}\right)$ (Theorem 9, \cite{singhContinuumArmedBanditsFunction2021}). This result gives us the lower bounds for the Mat\'ern, $\gamma$-exponential, and piecewise polynomial RKHSs displayed in Table~\ref{table:specificregbounds}, as well as the matching upper bounds that can be achieved by an algorithm such as UCB-Meta.

\subsection{Global-Local RKHS Optimization via LP-GP-UCB}
\label{subsec:algorithm}
In the previous sections we found that the smoothness of the kernel function may be leveraged in bandit optimization algorithms from two different perspectives, a local approximation one, and a global interpolation one typical in the kernelized bandit literature. In each case, the available samples are used differently, resulting in different surrogate models with different associated performance guarantees. For some kernels, the different approaches result in the same asymptotic regret bounds, while for others, there may be a trade-off between picking one approach over the other. For example, in the cases where characterizing the Mercer eigenvalues of the kernel is more difficult than embedding the RKHS in a H\"older space, it becomes possible to obtain explicit theoretical regret upper bounds using local continuity properties, but in the general absence of regret lower bounds it is not straightforward that the local approach is ever truly better than the global one. 

To unify the two perspectives, we would like to exploit the inherent smoothness of these kernels in an optimization algorithm that simultaneously leverages the local and global properties of the RKHS. Motivated by the generality and situational optimality of H\"older-optimal algorithms, we analyze LP-GP-UCB, an algorithm proposed in \cite{leeMultiScaleZeroOrderOptimization2022a} that augments Gaussian process (GP) surrogate models with LP approximations to exploit the existing smoothness properties of RKHS functions. The model generated by the data in the GP approach is optimal for the assumed RKHS in a global, regularized least-squares sense, which makes it powerful as a surrogate model for global optimization, while the LP approximations exploit the smoothness of the RKHS in local partitions of the search space. 
\subsubsection{Algorithm Overview}
\label{subsec:lpgpucb}
We first summarize the steps of the LP-GP-UCB algorithm \citep{leeMultiScaleZeroOrderOptimization2022a}. 

LP-GP-UCB operates by maintaining an adaptive partition $\mathcal{P}_t$ of the domain $\mathcal{X}$ and constructing upper confidence bounds (UCBs) that leverage both global kernel structure and local smoothness properties. The algorithm takes in as inputs the evaluation budget $n$, kernel $k$, RKHS norm bound $B$, noise parameter $\sigma$, polynomial degree $q$, H\"older exponent $s \in (0,1]$, a H\"older norm bound $L$, and confidence parameter $\delta\in[0,1]$. It defines $\alpha_1 = \max \{s, \min\{1, q\}\}$. 

For each cell $E\in\mathcal{P}_t$, the algorithm constructs a UCB as the minimum of three complementary bounds:
\[U_{t,E}=\min\{u_E^{(0)},u_{t,E}^{(1)},u_{t,E}^{(2)}\},\]
where $u_E^{(0)}$ provides an initial conservative bound that is updated based on the cell sizes and confidence interval widths, and $u_{t,E}^{(1)}$ and $u_{t,E}^{(2)}$ are defined as:
\begin{align*}
u_{t,E}^{(1)} &= \mu_t(x_{t,E}) + \beta_n \sigma_t(x_{t,E}) + L (\sqrt{d}r_E)^{\alpha_1}\\
u_{t,E}^{(2)} &= \hat{\mu}_t(E) + b_t(E) + L (\sqrt{d}r_E)^{\alpha_1}.
\end{align*}

Here $\mu_t$ and $\sigma_t$ are the GP posterior mean and variance, leveraging the global kernel structure, $\hat{\mu}_t(E)$ is an empirical estimate of the cell average, $b_t(E)$ is its confidence interval width, and $L(\sqrt{d}r_E)^{\alpha_1}$ bounds function variation across the cell using H\"older continuity.

At each step, the algorithm selects the cell $E_t$ maximizing the UCB $U_{t,E}$. It then decides to expand the partition when the cell is large and confidence intervals are tight relative to function variation, or sample uniformly at a point $x_t\in E_t$ to gather more information.

When cells become sufficiently small and contain enough observations $\mathcal{D}_{\mathcal{X}}^{(E)}$, the algorithm constructs local polynomial (LP) estimators. For a point $z \in E$, the LP estimator $\hat{f}_E (z, \vec{w}) = \sum_{x  \in \mathcal{D}^{(E)}} w_{x} y_x,$ uses interpolation weights $\vec{w}$ that solve the following problem \citep[Eq.~(1.36)]{nemirovskiTopicsNonParametricStatistics1998}:
\[\vec{w} = \argmin_{\vec{v} = \{v_x \,:\, x \in \mathcal{D}_{\mathcal{X}}^{(E)}\} } \; \sum_{x \in \mathcal{D}^{(E)}_{\mathcal{X}}} |v_x|^2\quad\text{s.t.}\quad p(z) = \sum_{x \in \mathcal{D}_{\mathcal{X}}^{(E)}} v_x p(x)\quad \forall p \in \mathcal{P}_{d}^q,\]
which ensures exact reproduction of $\mathcal{P}_d^q$, the polynomials up to degree $q$, while minimizing estimator variance. If the number of data points in the cell $E$, $|\mathcal{D}_{\mathcal{X}}^{(E)}|$, is larger than $(q+2)^d$, then the problem is solvable and its optimal solution is unique \citep[Lem. 1.3.1]{nemirovskiTopicsNonParametricStatistics1998}. For functions $f$ with $\|f\|_{\mathcal{C}^{q+s}} \leq L$, it is known that $\Phi_k(f,E) \leq L (\sqrt{d}r_E)^{q+s}$ \citep{jonssonFunctionSpacesSubsets1984}. 
This fact and an upper bound on the estimation error between $\hat{f}_E(x,\vec{w})$ and $f(x)$ from \citep[Proposition~1.3.1]{nemirovskiTopicsNonParametricStatistics1998} are used to bound the worst-case error of the LP estimator.

This hybrid UCB structure allows the algorithm to selectively leverage whichever property of the function, local or global, provides tighter error bounds in different regions of the search space. This design enables order-optimal performance across many kernel families without requiring a priori knowledge of which perspective is superior.

\subsubsection{Regret Analysis}
\label{subsec:regretanalysis}
We now develop the main result of this section where we use our information gain and H\"older embedding results to specialize the generic regret bounds for the LP-GP-UCB algorithm, which depend on the \textit{maximum information gain} $\gamma_n$ of the kernel $k$ and the H\"older smoothness parameter $\alpha$. We provide high-probability regret bounds for LP-GP-UCB with specific RKHSs, presented in Table~\ref{table:specificregbounds}. Recall first that $\tilde{\mathcal{O}}(\cdot)$ hides the polylogarithmic factors, and that $\alpha_1 = \max\{s, \, \min\{1, q\}\}$.

\begin{fact}[Regret of LP-GP-UCB \citep{leeMultiScaleZeroOrderOptimization2022a}]
    \label{theorem:regret1} 
Suppose Assumption~\ref{assump:main} holds, and LP-GP-UCB is run with a budget $n$, $q = \lceil\alpha\rceil-1$, $s=\alpha-q$, and inputs as described in Section~\ref{subsec:lpgpucb}. Then with probability at least $1-\delta$ for a given $\delta \in (0,1)$:
\begin{small}
\begin{align}
    \mathcal{R}_n &= \tilde{\mathcal{O}}( \gamma_n \sqrt{n} ). \label{eq:info_type1}
\end{align}
\end{small}
 In addition, the following smoothness-dependent bounds hold for sufficiently large $n$:
 \begin{small}
\begin{align}
   &\mathcal{R}_n = \tilde{\mathcal{O}}( n^{\frac{2\alpha-\alpha_1 + d}{2\alpha+d}}), \; \text{if } \gamma_n =\Omega(\sqrt{n}), \label{eq:smooth1}\\
 &\mathcal{R}_n = \tilde{\mathcal{O}}(n^{\frac{\alpha_1+d}{2\alpha_1 + d}}), \; \text{otherwise.} \label{eq:smooth2}
\end{align}
\end{small}
\end{fact}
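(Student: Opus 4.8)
The plan is to reconstruct the generic LP-GP-UCB guarantee of \cite{leeMultiScaleZeroOrderOptimization2022a}, which proceeds in three stages. First I would fix a high-probability \emph{good event} on which all three component bounds are simultaneously valid for every round $t\le n$ and every cell $E\in\mathcal{P}_t$. For $u^{(1)}_{t,E}$ a self-normalized GP concentration inequality of the kind used in \cite{srinivasInformationTheoreticRegretBounds2012} gives $|f(x)-\mu_t(x)|\le\beta_n\sigma_t(x)$ uniformly with $\beta_n=\tilde{\mathcal{O}}(B+\sigma\sqrt{\gamma_n})$, and the additive slack $L(\sqrt{d}r_E)^{\alpha_1}$ is deterministic: under the parameter choices of the statement $\alpha_1=\min\{1,\alpha\}$, so $f\in\mathcal{C}^{\alpha}$ with $\lVert f\rVert_{\mathcal{C}^{\alpha}}\le L$ embeds into $\mathcal{C}^{\alpha_1}$ and $f$ varies by at most $L(\sqrt{d}r_E)^{\alpha_1}$ across a cell of radius $r_E$. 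For $u^{(2)}_{t,E}$ I would combine the bias control $\Phi_k(f,E)\le L(\sqrt{d}r_E)^{q+s}$ of \cite{jonssonFunctionSpacesSubsets1984}, the weight bounds of Nemirovski's Proposition~1.3.1 \citep{nemirovskiTopicsNonParametricStatistics1998} (valid once $|\mathcal{D}^{(E)}_{\mathcal{X}}|\ge(q+2)^d$ and the uniform design in the cell is well-conditioned, itself a high-probability event), and a sub-Gaussian tail for the noise term $\sum_x w_x\eta_x$ of scale $\tilde{\mathcal{O}}(\sigma\lVert\vec{w}\rVert_2)=\tilde{\mathcal{O}}(\sigma|\mathcal{D}^{(E)}_{\mathcal{X}}|^{-1/2})$, to obtain $|f(z)-\hat f_E(z,\vec{w})|\le b_t(E)$. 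On this event $f(x)\le U_{t,E}$ for every $x\in E$, hence $f(x^*)\le U_{t,E_t}$ by the cell-selection rule.

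Second, I would convert this into a per-step bound $f(x^*)-f(x_t)\le U_{t,E_t}-f(x_t)$ and bound the right side by whichever branch of the $\min$ is active --- through $u^{(1)}$, $\tilde{\mathcal{O}}(\beta_n\sigma_t(x_{t,E_t})+L(\sqrt{d}r_{E_t})^{\alpha_1})$; through $u^{(2)}$, $\tilde{\mathcal{O}}(b_t(E_t)+L(\sqrt{d}r_{E_t})^{\alpha_1})$ --- using the deterministic H\"older slack once more to pass from the uniformly sampled $x_t$ to the UCB witness point. Summing the first form and invoking $\sum_t\sigma_t(x_t)^2=\mathcal{O}(\gamma_n)$ with Cauchy--Schwarz gives $\sum_t\beta_n\sigma_t(x_t)=\tilde{\mathcal{O}}(\gamma_n\sqrt n)$, while the accumulated slack $\sum_t L(\sqrt{d}r_{E_t})^{\alpha_1}$ is of the same order because the expansion rule only splits a cell after its confidence width has fallen below this slack; this gives \eqref{eq:info_type1}. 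For \eqref{eq:smooth1}--\eqref{eq:smooth2} I would instead reorganize the sum by dyadic partition scale: a cell of radius $r$ absorbs at most enough samples to drive $\min\{\beta_n\sigma_t,\,b_t(E)\}$ below $L(\sqrt{d}r)^{\alpha_1}$ before it splits, and there are $\tilde{\mathcal{O}}(r^{-d})$ active cells at scale $r$. Fixing a cutoff $\rho=\rho(n)$, the regret at the finest scale is $\tilde{\mathcal{O}}(n\rho^{\alpha_1})$ and the coarser scales spend at most the budget $n$; optimizing $\rho$ over the two possible drivers of the per-cell sample count --- the LP variance floor $\sigma|\mathcal{D}^{(E)}_{\mathcal{X}}|^{-1/2}$ against the bias floor $L(\sqrt{d}r)^{\alpha}$ when $\gamma_n=o(\sqrt n)$, and the GP term $\beta_n\sigma_t$ when $\gamma_n=\Omega(\sqrt n)$ --- produces the exponents $\tfrac{2\alpha-\alpha_1+d}{2\alpha+d}$ and $\tfrac{\alpha_1+d}{2\alpha_1+d}$, with the ``sufficiently large $n$'' hypothesis absorbing lower-order terms and guaranteeing cells reach the LP-active regime $|\mathcal{D}^{(E)}_{\mathcal{X}}|\ge(q+2)^d$.

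The hard part will be the partition bookkeeping in the second regime: making precise the invariant that ties a cell's sample count to the \emph{minimum} of its GP and LP confidence widths, bounding the number of simultaneously active cells at each scale by $\tilde{\mathcal{O}}(\rho^{-d})$, and correctly tracking the interplay between the exponent $\alpha_1\le 1$ used to extend point and cell-average estimates to an entire cell and the full exponent $\alpha=q+s$ exploited only inside the LP bias --- this asymmetry is exactly what yields $\tfrac{2\alpha-\alpha_1+d}{2\alpha+d}$ rather than the symmetric H\"older rate $\tfrac{\alpha_1+d}{2\alpha_1+d}$ of \cite{liuSmoothBanditOptimization2021}. A secondary nuisance is that $x_t$ is drawn uniformly inside $E_t$ rather than at the witness point, so a short averaging argument is needed to transfer cell-level bounds to sampled-point regret, and one must check that the union bounds over the polynomially-many cells and rounds cost only polylogarithmic factors.
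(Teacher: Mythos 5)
You should note first that the paper itself contains no proof of this statement: it is imported as a \emph{Fact} from \cite{leeMultiScaleZeroOrderOptimization2022a}, and the only justification given is the citation (the paper's own contribution, Theorem~\ref{thm:lpgpucbregret}, merely specializes the Fact via Propositions~\ref{prop:infogain} and~\ref{prop:holder}). So there is no internal proof to compare against, and what you have written is a reconstruction of the external argument. At the architectural level your reconstruction is consistent with that analysis: a uniform good event combining self-normalized GP concentration with $\beta_n=\tilde{\mathcal{O}}(B+\sigma\sqrt{\gamma_n})$, the Nemirovski weight/variance bound and the Jonsson--Wallin bias bound for the LP estimator, validity of $U_{t,E}$ as a UCB, a per-step decomposition, the Cauchy--Schwarz step $\sum_t\sigma_t(x_t)\le\sqrt{n\sum_t\sigma_t^2(x_t)}=\mathcal{O}(\sqrt{n\gamma_n})$ yielding \eqref{eq:info_type1}, and the observation that the LP bias exploits the full exponent $\alpha=q+s$ while transfer across a cell only uses $\alpha_1$, which is indeed the source of the asymmetric exponent $\frac{2\alpha-\alpha_1+d}{2\alpha+d}$. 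Your identification $\alpha_1=\min\{1,\alpha\}$ under the stated parameter choices is also correct.

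There is, however, a substantive gap in the step that is supposed to produce \eqref{eq:smooth1}--\eqref{eq:smooth2}: you attach the $\gamma_n$-regimes to the wrong exponents. You pair the LP variance-versus-bias driver with $\gamma_n=o(\sqrt{n})$ and claim it yields $\frac{2\alpha-\alpha_1+d}{2\alpha+d}$, and pair the GP driver $\beta_n\sigma_t$ with $\gamma_n=\Omega(\sqrt{n})$ and claim it yields $\frac{\alpha_1+d}{2\alpha_1+d}$. The Fact asserts the opposite pairing, and the mechanism forces the opposite as well: when $\gamma_n=\Omega(\sqrt{n})$, $\beta_n=\tilde{\mathcal{O}}(\sqrt{\gamma_n})$ is so large that the GP confidence widths certify nothing useful, so refinement must be certified by the LP estimator (bias floor $L(\sqrt{d}r)^{\alpha}$, per-cell sample count $\sim r^{-2\alpha}$, transfer only at order $r^{\alpha_1}$), which is exactly the balance $n\sim r^{-(2\alpha+d)}$ and regret $\sim n\,r^{\alpha_1}=n^{\frac{2\alpha-\alpha_1+d}{2\alpha+d}}$; the tighter zooming-type rate $n^{\frac{\alpha_1+d}{2\alpha_1+d}}$ is only available in the complementary regime where $\gamma_n$ is small enough for the GP widths to drive cell refinement without polynomial inflation. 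As written, your optimization over the cutoff $\rho$ would prove each bound under the wrong hypothesis on $\gamma_n$, i.e., not the statement. In addition, the items you yourself flag as ``the hard part'' --- the invariant tying a cell's sample count to the minimum of its GP and LP confidence widths, the $\tilde{\mathcal{O}}(\rho^{-d})$ count of simultaneously active cells per scale, and the averaging argument transferring cell-level bounds to the uniformly drawn $x_t$ --- are precisely the content of the original proof and are left unargued, so the sketch cannot be completed without importing that machinery from \cite{leeMultiScaleZeroOrderOptimization2022a}.
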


Now we state the specific regret bounds for the Mat\'ern, SE, RQ, $\gamma$-Exp, PP, and Dirichlet kernels as a special case of Fact~\ref{theorem:regret1}.

\begin{theorem}[Specific Regret Bounds for LP-GP-UCB Algorithm]
    \label{thm:lpgpucbregret}
    Suppose Assumption~\ref{assump:main} holds, and LP-GP-UCB is run with a budget $n$, $q = \lceil\alpha\rceil-1$, $s=\alpha-q$, and inputs as described in Section~\ref{subsec:lpgpucb}. Then with probability at least $1-\delta$ for a given $\delta \in (0,1)$:
    \begin{itemize}
        \item When $\hat{k}$ decays exponentially or is compactly supported, $\mathcal{R}_n=\tilde{\mathcal{O}}\left(\sqrt{n}\right)$.
        \item When $\hat{k}$ decays at least as fast a polynomial with rate $\tau=\beta+d$ for some $\beta>0$,
        \[\mathcal{R}_n= \begin{cases}
            \tilde{\mathcal{O}}\left(n^{\frac{\beta+2d}{2\beta+2d}}\right), & \beta\le 2\\
            \tilde{\mathcal{O}}\left(n^{\frac{\beta-1+d}{\beta+d}}\right) & \beta>2.
        \end{cases}\]
        The latter bound may be tightened to $\mathcal{R}_n=\tilde{\mathcal{O}}\left(n^{\frac{\beta+3d}{2\beta+2d}}\right)$ when $\hat{k}$ decays like a polynomial with $\beta>2$.
    \end{itemize}
\end{theorem}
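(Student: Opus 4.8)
The plan is to instantiate Fact~\ref{theorem:regret1} with the concrete spectral decay rates from Proposition~\ref{prop:spectraldecay}, the information gain bounds from Proposition~\ref{prop:infogain}, and the H\"older exponents from Proposition~\ref{prop:holder}, and then simplify the resulting expressions case by case. The key observation is that Fact~\ref{theorem:regret1} gives three candidate bounds --- the information-type bound $\tilde{\mathcal{O}}(\gamma_n\sqrt{n})$, the smooth bound $\tilde{\mathcal{O}}(n^{(2\alpha-\alpha_1+d)/(2\alpha+d)})$ in the regime $\gamma_n=\Omega(\sqrt{n})$, and the smooth bound $\tilde{\mathcal{O}}(n^{(\alpha_1+d)/(2\alpha_1+d)})$ otherwise --- and for each kernel family we take the minimum of whichever bounds apply. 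Since the algorithm is run with $q=\lceil\alpha\rceil-1$ and $s=\alpha-q$, we have $q+s=\alpha$, and the relevant H\"older exponent fed to the algorithm equals the embedding exponent $\alpha=\beta/2$ from Proposition~\ref{prop:holder}. One must also track $\alpha_1=\max\{s,\min\{1,q\}\}$: for small $\alpha$ (i.e.\ $\alpha\le 1$, equivalently $\beta\le 2$) we have $q=0$, so $\alpha_1=s=\alpha=\beta/2$; for $\alpha>1$ we have $q\ge 1$ and $s\in(0,1]$, so $\alpha_1=\max\{s,1\}=1$ unless $s=\alpha-q$ happens to exceed... no, $s\le 1$ always since $q=\lceil\alpha\rceil-1$ forces $s=\alpha-q\in(0,1]$; hence $\alpha_1=1$ whenever $\alpha>1$.

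For the exponentially-decaying and compactly-supported kernels (SE, RQ, Dirichlet), Proposition~\ref{prop:holder} gives $\alpha=\infty$, so we can run LP-GP-UCB with any finite smoothness; Proposition~\ref{prop:infogain} gives $\gamma_n=\tilde{\mathcal{O}}(1)$ (polylog), hence $\gamma_n=o(\sqrt n)$, so the governing bound is \eqref{eq:smooth2}: $\mathcal{R}_n=\tilde{\mathcal{O}}(n^{(\alpha_1+d)/(2\alpha_1+d)})$, and letting $\alpha_1\to\infty$ (choosing $\alpha$ large but finite) drives this to $\tilde{\mathcal{O}}(\sqrt n)$. For the polynomial case with $\tau=\beta+d$: when $\beta\le 2$, set $\alpha=\beta/2\le 1$, so $\alpha_1=\alpha=\beta/2$, and since here $\gamma_n$ may be large (the bound $\tilde{\mathcal{O}}(n^{d/\beta}\cdot\mathrm{polylog})$ can exceed $\sqrt n$), we use \eqref{eq:smooth1}: $\mathcal{R}_n=\tilde{\mathcal{O}}(n^{(2\alpha-\alpha_1+d)/(2\alpha+d)}) = \tilde{\mathcal{O}}(n^{(\beta+d)/(\beta+d)})$... wait --- with $\alpha=\alpha_1=\beta/2$ this is $n^{(2\cdot\beta/2-\beta/2+d)/(2\cdot\beta/2+d)} = n^{(\beta/2+d)/(\beta+d)}=n^{(\beta+2d)/(2\beta+2d)}$, which is exactly the claimed bound. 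When $\beta>2$, set $\alpha=\beta/2>1$, so $\alpha_1=1$, and \eqref{eq:smooth1} gives $n^{(2\alpha-1+d)/(2\alpha+d)} = n^{(\beta-1+d)/(\beta+d)}$, again matching. Finally, when the decay is two-sided polynomial ($C_1(1+\|\omega\|)^{-\tau}\le\hat k(\omega)$), Proposition~\ref{prop:infogain} upgrades the information gain to $\gamma_n=\tilde{\mathcal{O}}(n^{d/(\beta+d)})$, so $\gamma_n\sqrt n=\tilde{\mathcal{O}}(n^{(\beta+3d)/(2\beta+2d)})$ via \eqref{eq:info_type1}; comparing exponents shows $\frac{\beta+3d}{2\beta+2d}<\frac{\beta-1+d}{\beta+d}$ precisely when $\beta>2$, so in that regime the information-type bound is the tighter one, giving the stated improvement.

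The main obstacle is purely bookkeeping accuracy rather than conceptual depth: one has to verify that the improved information gain bound $\gamma_n=\tilde{\mathcal O}(n^{d/(\beta+d)})$ is actually available in the two-sided case --- this needs the side condition ``$\beta\ge 1$ and $d$ odd, or $\beta\ge 2$'' from Proposition~\ref{prop:infogain}, which is compatible with $\beta>2$ --- and one must double-check that the three comparisons of exponents ($\gamma_n\sqrt n$ vs.\ the two smooth bounds) come out the right way in each regime, including confirming that $\gamma_n=\Omega(\sqrt n)$ really does hold in the polynomial cases so that \eqref{eq:smooth1} rather than \eqref{eq:smooth2} is the applicable smooth bound. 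A secondary subtlety is the $\alpha=\infty$ limit for SE/RQ/Dirichlet: since Fact~\ref{theorem:regret1} is stated for finite $\alpha$, the argument is that for every $\varepsilon>0$ we may pick a finite $\alpha$ large enough that the exponent $(\alpha_1+d)/(2\alpha_1+d)<1/2+\varepsilon$, and absorb the gap into the $\tilde{\mathcal O}$; this should be stated cleanly to avoid claiming a result the cited fact does not literally give at $\alpha=\infty$.
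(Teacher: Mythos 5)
Your handling of the polynomial-decay cases follows the paper's route and the exponent arithmetic there is correct ($\alpha_1=\alpha=\beta/2$ for $\beta\le 2$, $\alpha_1=1$ for $\beta>2$, and the tightened bound via $\gamma_n=\tilde{\mathcal{O}}(n^{d/(\beta+d)})$ from Proposition~\ref{prop:infogain} plugged into \eqref{eq:info_type1}). However, your argument for the first bullet (SE/RQ/Dirichlet, i.e.\ exponentially decaying or compactly supported $\hat{k}$) contains a genuine flaw: you propose to use \eqref{eq:smooth2} and ``let $\alpha_1\to\infty$,'' but by definition $\alpha_1=\max\{s,\min\{1,q\}\}\le 1$, since $s=\alpha-q\in(0,1]$ and $\min\{1,q\}\le 1$ --- a point you yourself note earlier when you conclude $\alpha_1=1$ whenever $\alpha>1$. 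No choice of finite $\alpha$ pushes $\alpha_1$ above $1$, so \eqref{eq:smooth2} can never give anything better than $\tilde{\mathcal{O}}(n^{(1+d)/(2+d)})$, which is strictly worse than $\sqrt{n}$ for every $d\ge 1$; the ``$\varepsilon$-limit'' patch in your final paragraph fails for the same reason. The correct (and simpler) route, which is the paper's, is to note that Proposition~\ref{prop:infogain} gives polylogarithmic $\gamma_n$ for these kernels and to feed this directly into the information-type bound \eqref{eq:info_type1}, so $\mathcal{R}_n=\tilde{\mathcal{O}}(\gamma_n\sqrt{n})=\tilde{\mathcal{O}}(\sqrt{n})$ with the polylog absorbed into $\tilde{\mathcal{O}}$. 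Since $\gamma_n=o(\sqrt n)$ here, both \eqref{eq:info_type1} and \eqref{eq:smooth2} are valid, and the former is the tighter of the two; your claim that \eqref{eq:smooth2} is ``the governing bound'' confuses applicability with tightness.

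Two smaller points. First, your closing assertion that $\frac{\beta+3d}{2\beta+2d}<\frac{\beta-1+d}{\beta+d}$ ``precisely when $\beta>2$'' is wrong: cross-multiplying shows the inequality holds iff $\beta>d+2$, so for $2<\beta\le d+2$ the information-type bound is not actually smaller than the smoothness bound (the theorem's ``may be tightened'' phrasing sidesteps this, but your comparison as stated is incorrect). Second, in the one-sided polynomial case with $\beta>2$ you cannot simply ``take the minimum of whichever bounds apply'': since only an upper bound on $\gamma_n$ is available, you do not know whether $\gamma_n=\Omega(\sqrt n)$, so the guaranteed bound is the looser of \eqref{eq:smooth1} and \eqref{eq:smooth2}, which for $\alpha_1=1\le\alpha$ is exactly the claimed $n^{(\beta-1+d)/(\beta+d)}$ --- your exponent is right, but the justification should be that this bound dominates both regimes, not that the $\Omega(\sqrt n)$ condition is known to hold.
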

\begin{proof}
Proposition~\ref{prop:infogain} gives polylogarithmic upper bounds on $\gamma_n$ for $k$ when $\hat{k}$ is exponential or compactly supported, which can be combined with the information-based regret bound for LP-GP-UCB to give the explicit bound on $\mathcal{R}_n$. 

When $\hat{k}$ decays at least as fast as a polynomial with rate $\beta+d$, $\mathcal{H}_k$ is embedded in a H\"older space of order $\alpha=\frac{\beta}{2}$ due to Proposition~\ref{prop:holder}. Then when $\beta\le2$, both of the smoothness-based regret bounds from Fact~\ref{theorem:regret1} become $\mathcal{O}\left(n^{\frac{\beta+2d}{2\beta+2d}}\right)$, since $\alpha_1=\alpha=\frac{\beta}{2}$ in this case. Otherwise when $\beta>2$, $\alpha_1=1$, and only the looser smoothness-dependent bound can be guaranteed to hold. However, when $\hat{k}$ decays like a polynomial and $\beta>2$, Proposition~\ref{prop:infogain} holds and the information-gain bound $\gamma_n=\tilde{\mathcal{O}}\left(n^{\frac{d}{\beta+d}}\right)$ can be applied, tightening the regret bound to $\tilde{\mathcal{O}}\left(n^{\frac{\beta+3d}{2\beta+2d}}\right)$. 
\end{proof}
The resulting regret bounds for specific kernels are given in Table~\ref{table:specificregbounds}. For LP-GP-UCB, the information-based regret bounds and smoothness-dependent regret bounds come into play in different smoothness parameter regimes. We note that LP-GP-UCB does not always achieve the tightest bounds across all kernels, as it does not optimally use the smoothness information and is linear in $\gamma_n$. However, it does provide a unified algorithm where it can match lower bounds without requiring a priori knowledge of which perspective is superior. While the local polynomial approximations allow the algorithm to achieve order-optimality in the low smoothness regime, the global structure ensures that the performance benefits from higher levels of regularity. 

\section{Discussion}
\label{sec:discussion}

We have shown that the spectral properties of isotropic kernels provide a unifying framework for approaching bandit optimization from both global and local perspectives. By characterizing the Fourier decay rates of the Mat\'ern, square-exponential, rational-quadratic, $\gamma$-exponential, piecewise-polynomial, and Dirichlet kernels, we showed that spectral regularity simultaneously determines performance limits for both kernelized algorithms leveraging global GP surrogates and smoothness-based algorithms that use local polynomial approximations. In particular, we showed that the spectral decay rate determines both the maximum information gain, which governs global interpolation error, and the H\"older smoothness parameters, which govern local approximation error. This duality suggests that the problem of bandit optimization in isotropic RKHSs may reduce to that of bandit optimization in smooth function spaces in general.

Our results also reveal that there are order-optimal algorithms for RKHS optimization that do not rely on the convenient properties of kernel regression in RKHSs. Furthermore, the dependence of the regret for kernelized bandits on kernel regularity suggests that Fourier regularity or smoothness may provide a more powerful and general approach to optimizing RKHS functions. However, the tradeoffs between analytical and computational performance across these different approaches need to be studied further and considered in the algorithm design process. The analysis of LP-GP-UCB demonstrates that hybrid approaches can achieve order-optimality across many kernel families by adaptively leveraging whichever structural property yields tighter error bounds in different regions of the search space. The improvement of the hybrid approach to achieve tighter error bounds for both the global and local approximants is of further interest.

\bibliographystyle{abbrvnat}
\bibliography{refs.bib}

\appendix

\section{Preliminaries}
\label{appendix:definitions}
\begin{itemize}
    \item \textbf{Subgaussianity: } A zero-mean random variable $\eta$ is $\sigma^2$-subgaussian if for all $t\ge 0$, $\mathbb{P}\{|\eta|>t\}\le2e^{-\frac{t^2}{2\sigma^2}}$ \citep{vershyninHighDimensionalProbabilityIntroduction2018}.
    \item \textbf{Reproducing Kernel Hilbert Spaces (RKHS): } Given a positive-definite kernel $k$, we shall use the term $\mathcal{H}_k$ and $\lVert\cdot\rVert_k$ to denote the RKHS associated with $k$ and the corresponding RKHS norm. In particular, $\mathcal{H}_k$ is the completion of the inner product space consisting of functions in the linear span of $k$ and the inner product defined by \[\langle f,g\rangle_k=\sum_{i=1}^{m_1}\sum_{j=1}^{m_2}a_ib_jk(x_i,z_j),\] for functions $f=\sum_{i=1}^{m_1}a_ik(\cdot,x_i)$ and $g=\sum_{j=1}^{m_2}b_j k(\cdot,z_j)$ \citep{rasmussenGaussianProcessesMachine2006}.
    
    \item \textbf{H\"older Spaces: } For $\alpha>0$, we use $\mathcal{C}^{\alpha}$ and $\lVert\cdot\rVert_{\mathcal{C}^{\alpha}}$to denote the H\"older (H\"older-Zygmund) space of order $\alpha$ and the corresponding norm. In particular, $\mathcal{C}^{\alpha}$ contains functions for which the $p$\textsuperscript{th} partial derivatives, $p=\lceil\alpha\rceil-1$, are H\"older continuous with exponent $\alpha-p$ and the derivatives up to and including order $p$ are continuous. \citep{saloFunctionSpaces2008}.

    \item \textbf{Besov Spaces: } For $s>0$ and $1\le p,q\le\infty$, we use $\mathcal{B}^s_{p,q}$ to denote the Besov space with smoothness $s$, integrability parameter $p$, and smoothness scaling parameter $q$ and $\lVert\cdot\rVert_{\mathcal{B}^s_{p,q}}$ to denote the corresponding norm. In particular, $\mathcal{B}^s_{p,q}$ consists of functions in $L^p$ whose  $L^p$ modulus of continuity decays like $t^s$ in $L^q$ norm with respect to $\frac{dt}{t}$. \citep{saloFunctionSpaces2008}.
    \item \textbf{Kernel Functions: } For $k_{\nu}$, the Mat\'ern kernel with parameter $\nu>0$,
    \[k_{\nu}(r)=\frac{2^{1-\nu}}{\Gamma(\nu)}\left(\frac{\sqrt{2\nu}r}{l}\right)^{\nu}K_{\nu}\left(\frac{\sqrt{2\nu}r}{l}\right),\;\nu>0,\]
    where $\Gamma$ is the gamma function and $K_{\nu}$ is the modified Bessel function of the second kind.
    
    For $k_{\text{SE}}$, the square-exponential kernel, 
     \[k_{\text{SE}}(r)=e^{-\frac{r^2}{2l^2}}.\]
    
    For $k_{\text{RQ}}$, the rational-quadratic kernel, 
    \[k_{RQ}(r)=\left(1+\frac{r^2}{2a l^2}\right)^{-a}.\]
    
    For $k_{\gamma-\text{Exp}}$, the $\gamma$-exponential kernel with parameter $\gamma\in(0,2]$, 
    \[k_{\gamma-\text{Exp}}(r)=\exp\left(-\left(\frac{r}{l}\right)^{\gamma}\right),\;0<\gamma\le 2.\]
    
    The piecewise-polynomial functions $k_{\text{PP},q}$ are a family of polynomial kernel functions that have compact support $(-1,1)$ and are $2q$-times continuously differentiable.
    \[k_{\text{PP},q}(r)=\begin{cases}
        \sum_{j=0}^{\lfloor\frac{d}{2}\rfloor+3q+1}c_{j,q}r^j, &0\le r\le 1,\\
        0, & r>1
    \end{cases}.\]
     The minimal-degree polynomial satisfying these constraints and generating a positive definite kernel function has degree $\lfloor\frac{d}{2}\rfloor+3q+1$ and the coefficients $c_{j,q}$ can be computed recursively (Theorem 9.13, \cite{wendlandScatteredDataApproximation2004}).
     
     For $k_{\text{PBL}}$, the Dirichlet kernel,
    \[k_{\text{PBL}}(r)=\frac{\sin((2n+1)x/2)}{(2n+1)\sin(x/2)}=\frac{1}{2n+1}\sum_{k=-n}^ne^{-ikr}.\]
    
\end{itemize}

\section{Spectral Characterizations and Information Gain Bounds}
\label{APPPENDIX:spectra}
\subsection{Proof of Proposition~\ref{prop:spectraldecay}}
\label{appendix:decayproof}
We first show the result for the square-exponential, rational-quadratic, Dirichlet, and Mat\'ern kernels by direct examination of the Fourier transform itself. In addition to showing the appropriate decay, we give the explicit transforms which may be of interest beyond the scope of this work. The kernel definitions used are given in Appendix~\ref{appendix:definitions}.

\textbf{Direct Transform Computation (Square-Exponential, Mat\'ern, Dirichlet)}
The Fourier transform of the square-exponential kernel can be shown to have square-exponential decay.
\[\hat{k}_{\text{SE}}(\omega)=(2\pi l^2)^{\frac{d}{2}}e^{-\frac{l^2\omega^2}{2}}.\]
Thus in the limit, the Fourier transform of the square-exponential kernel is smaller than $C_1\exp(-C_2\lVert\omega\rVert_2)$ for any finite $C_1,C_2>0$. 

Next we consider the Mat\'ern covariance, a generalization of the square-exponential kernel. The Fourier transform is given in \cite{rasmussenGaussianProcessesMachine2006} as:
\[\hat{k}_{\nu}(\omega)=(4\pi)^{\frac{d}{2}}\frac{\Gamma(\nu+\frac{d}{2})}{\Gamma(\nu)}\left(\frac{2\nu}{l^2}\right)^{\nu}\left(\frac{2\nu}{l^2}+\omega^2\right)^{-(\nu+\frac{d}{2})}.\]
Since the coefficients are positive, the Mat\'ern kernel has a Fourier transform with polynomial decay rate $2\nu+d$.

For the Dirichlet kernel, the Fourier transform is bandlimited and thus an extreme case of exponential decay.
\[\hat{k}_{\text{PBL}}(\omega)=\frac{2\pi}{2n+1}\sum_{k=-n}^n\delta(\omega-k),\;n\in\mathbb{N}_0.\]

\textbf{Asymptotic Bounds (Rational-Quadratic, $\gamma$-Exponential), Piecewise-Polynomial}
The exponential eigendecay of the rational-quadratic's Fourier transform comes from its construction as a sum of SE kernels. As observed in \cite{maternSpatialVariation1986}, the RQ kernel is the expectation of the SE kernel with a Gamma$(a,2al)$ distribution on the length-scale parameter. 
\[k_{RQ}(r)=\left(1+\frac{r^2}{2a l^2}\right)^{-a}
    = \int_0^{\infty} e^{-\tau r^2}e^{-\tau(2al)}\frac{(2al)^a}{\Gamma(a)}\tau^{a-1}d\tau,\;a>0.\]
We note that the RQ kernel behaves like the Fourier transform of the Mat\'ern kernel stated above, which is known to decay exponentially as $r\rightarrow\infty$ due to the exponential asymptotic decay of the modified Bessel function of the second kind, $K_{\nu}$ \citep{abramowitzHandbookMathematicalFunctions1965}. To make this connection precise, we compute the Fourier transform directly.
\begin{align*}
    \hat{k}_{\text{RQ}}&=\int_{-\infty}^{\infty}k_{\text{RQ}}(r)e^{-jr\omega}dr\\
    &=\int_{-\infty}^{\infty}\int_0^{\infty} e^{-\tau r^2}e^{-\tau(2al)}\frac{(2al)^a}{\Gamma(a)}\tau^{a-1}e^{-jr\omega}d\tau dr \\
    &=\int_0^{\infty} e^{-\tau(2al)}\frac{(2al)^a}{\Gamma(a)}\tau^{a-1}\int_{-\infty}^{\infty}e^{-\tau r^2}e^{-jr\omega}drd\tau \\
    &= \int_0^{\infty} e^{-\tau(2al)}\frac{(2al)^a}{\Gamma(a)}\tau^{a-1}\sqrt{\frac{\pi}{\tau}}e^{-\frac{\omega^2}{4\tau}}d\tau \\
    &= \frac{(2al)^a}{\Gamma(a)}\sqrt{\pi}\int_0^{\infty}\tau^{a-\frac{3}{2}}e^{-\tau(2al)-\frac{\omega^2}{4\tau}}d\tau \\
    &= \frac{(2al)^a}{\Gamma(a)}\sqrt{\pi}\int_{\infty}^{0}\left(\frac{\omega^2}{4t}\right)^{a-\frac{3}{2}}e^{-\frac{\omega^2(2al)}{4t}-t}\left(-\frac{\omega^2}{4t^2}\right)dt &\text{with $t=\frac{\omega^2}{4\tau}$}\\
    &= \frac{(2al)^a}{\Gamma(a)}\sqrt{\pi}\left(\frac{\omega^2}{4}\right)^{a-\frac{1}{2}}\int_0^{\infty}t^{-a-\frac{1}{2}}e^{-t-\frac{\omega^2(2al)}{4t}}dt \\
    &= \frac{(2al)^a}{\Gamma(a)}2\sqrt{\pi}\left(\frac{\omega}{2}\right)^{a-\frac{1}{2}}K_{a-\frac{1}{2}}(\omega\sqrt{2al}) &\text{\cite{olverNISTDigitalLibrary2025}, 10.32.10.} \\
    &\sim\frac{2\pi(2al)^{a-\frac{1}{4}}}{\Gamma(a)}\left(\frac{\omega}{2}\right)^{a-1}e^{-2\omega\sqrt{2al}}\sum_{k=0}^{\infty}\frac{\left(1-a\right)_k\left(a\right)_k}{k!(-2\omega\sqrt{2al})^k}&\text{\cite{olverNISTDigitalLibrary2025}, 10.40.2.} 
  \end{align*}
  Thus, since $K_{a-\frac{1}{2}}(\omega\sqrt{2al})$, a modified Bessel function of the second kind, decays exponentially in the limit, $\hat{k}_{\text{RQ}}$ also decays exponentially.
  
  For $k_{\gamma-\text{Exp}}$, we are limited to studying the asymptotic decay of the Fourier transform because there is no closed form in terms of elementary mathematical functions aside from the simple case $\gamma=2$, the SE kernel, and $\gamma=1$, the exponential kernel.
 
 As noted in \cite{yaglomCorrelationTheoryStationary2012}, this function is well-studied in probability theory because it is in fact a characteristic function of a L\'evy process with $\gamma$-stable distribution, for which asymptotic density estimates were proposed in \cite{pruittPotentialKernelHitting1969} and proven for the full parameter range $0<\gamma<2$ in \cite{watanabeAsymptoticEstimatesMultidimensional2007}. In particular, as a special case of Theorem 1.5.1 in \cite{watanabeAsymptoticEstimatesMultidimensional2007}, when the spectral measure of a $\gamma$-stable L\'evy process is uniform and continuous, there exist constants $C_1,C_2>0$ such that the density satisfies $C_1(1+\lVert x\rVert_2)^{-(\gamma+d)}\le p(x)\le C_2(1+\lVert x\rVert_2)^{-(\gamma+d)}$ for $x\in\mathbb{R}^d$. Since $k_{\gamma-Exp}(r)$ is the characteristic function of an \textit{isotropic} $\gamma$-stable L\'evy process, the spectral measure is uniform and continuous, and so by duality, its Fourier transform decays polynomially fast with decay rate $\gamma+d$.

For the piecewise polynomial functions, by Theorem 2.1 of \cite{wendlandErrorEstimatesInterpolation1998} there exists a $C_1>0$ such that $\hat{k}_{\text{PP},q}(\omega)\le C_1\lVert\omega\rVert_2^{-2q-1-d}$ for $\lVert\omega\rVert_2>0$. Furthermore, if $q\ge1$ for $d=1,2$, then $\hat{k}_{\text{PP},q}(\omega)\sim (1+\lVert\omega\rVert_2)^{-2q-1-d}$. Thus the piecewise-polynomial kernels have Fourier transforms that decay polynomially with decay rate $2q+1+d$.

\section{Global Interpolation and Information Gain}
\label{appendix:information}

\subsection{Proof of Proposition~\ref{prop:infogain}}
\label{appendix:infogainproof}
 Recall Mercer's theorem (e.g., Theorem 4.2, \cite{rasmussenGaussianProcessesMachine2006}), which states that a positive definite kernel $K$ may be expressed in terms of absolutely summable Mercer eigenvalues $\lambda_i>0$ and eigenfunctions $\phi_i$:
\[k(x,y)=\sum_{i=1}^{\infty}\lambda_i\phi_i(x)\phi_i^*(y).\]
These eigenvalues characterize the fundamental limits of $L_2$ function approximation in finite-dimensional subspaces of RKHSs, and can be bounded using the decay of the kernel's Fourier transform \citep{schabackApproximationPositiveDefinite2003}. 
\begin{fact}[Eigenvalue Upper Bounds (Theorem 6.5,8 \cite{schabackApproximationPositiveDefinite2003})]
\label{fact:eigenbounds}
    The Mercer eigenvalues of a kernel $k$ whose Fourier transform has exponential decay with for a bounded domain satisfy $\lambda_{n+1}\le C_1\exp(-C_2n^{1/d})$ for $n\rightarrow\infty$ and some finite $C_1,C_2>0$.

    The Mercer eigenvalues of a kernel $k$ whose Fourier transform has polynomial decay with rate $\tau=\beta+d$ with $\beta>\frac{d}{2}$ for a bounded domain satisfy $\lambda_{n+1}\le Cn^{-\beta/d}$ for $n\rightarrow\infty$ and some finite $C>0$. This bound may be tightened to $\lambda_{n+1}\le Cn^{-(\beta+d)/d}$ when $\lfloor\frac{\beta+d}{2}\rfloor>\frac{d}{2}$, using improved error estimates from \cite{narcowichSobolevErrorEstimates2006}.
\end{fact}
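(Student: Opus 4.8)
The plan is to bound the Mercer eigenvalues $\lambda_{n+1}$ by reinterpreting them as the squared approximation numbers of the canonical embedding $I:\mathcal{H}_k\hookrightarrow L_2(\Omega)$, and then to control those numbers by scattered-data interpolation estimates whose rate is set by the Fourier decay of $k$. Concretely, since $I$ is compact with singular values equal to $\sqrt{\lambda_{n}}$, the Mercer eigenvalues coincide with squared approximation numbers, $\lambda_{n+1}=a_{n+1}(I)^2$ with $a_{n+1}(I)=\inf_{\dim V=n}\sup_{\|f\|_k\le 1}\inf_{g\in V}\|f-g\|_{L_2(\Omega)}$. Thus it suffices, for each index $n$, to exhibit an $n$-dimensional subspace $V_n\subset L_2(\Omega)$ that approximates every unit-norm RKHS function to the claimed accuracy.

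For the construction of $V_n$ I would take a quasi-uniform point set $X_N\subset\Omega$ of cardinality $N\sim n$ with fill distance $h_{X_N}\sim N^{-1/d}$, and let $V_n$ be the span of the kernel translates $\{k(\cdot,x_i)\}$, equivalently the range of the kernel interpolation operator on $X_N$. The error of the RKHS interpolant $s_f$ is then governed by the power function $P_{X_N}$, so the whole argument reduces to estimating this error in terms of $h_{X_N}$ and $\hat{k}$. Using the Fourier characterization $\|f\|_k^2=(2\pi)^{-d}\int|\hat f(\omega)|^2/\hat k(\omega)\,d\omega$, an upper bound $\hat k(\omega)\le C_2(1+\|\omega\|_2)^{-(\beta+d)}$ yields the standard native-space estimate $\sup_{x\in\Omega}P_{X_N}(x)\le C h_{X_N}^{\beta/2}$, hence $\|f-s_f\|_{L_2(\Omega)}\le |\Omega|^{1/2}\|f-s_f\|_\infty\le C h_{X_N}^{\beta/2}\|f\|_k$. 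Substituting $h_{X_N}\sim N^{-1/d}$ into $\lambda_{n+1}=a_{n+1}(I)^2$ gives the base bound $\lambda_{n+1}\le C n^{-\beta/d}$, reproducing the Schaback estimate.

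The tightening to $\lambda_{n+1}\le C n^{-(\beta+d)/d}$ is where the extra hypothesis enters. The crude passage from $L_\infty$ to $L_2$ above discards a factor $h^{d/2}$; to recover it I would replace the power-function bound with the $L_2$ sampling inequality of Narcowich--Ward--Wendland. Under the two-sided spectral bound, $\mathcal{H}_k$ is norm-equivalent to the Sobolev space $H^{(\beta+d)/2}(\Omega)$, and that inequality then yields the sharper $\|f-s_f\|_{L_2(\Omega)}\le C h_{X_N}^{(\beta+d)/2}\|f\|_k$ directly, so $\lambda_{n+1}\le C n^{-(\beta+d)/d}$. This sampling inequality rests on a Bramble--Hilbert/polynomial-reproduction argument requiring an integer order of Sobolev smoothness exceeding $d/2$, which is exactly the role of the hypothesis $\lfloor(\beta+d)/2\rfloor>d/2$: it guarantees $H^{\lfloor(\beta+d)/2\rfloor}(\Omega)\hookrightarrow C(\Omega)$ so that point evaluation is admissible and the local polynomial estimates apply. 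For the exponential-decay case the same scheme applies with $k$ now analytic: the interpolation error decays super-algebraically, at rate $\exp(-c/h_{X_N})$, and with $h_{X_N}\sim n^{-1/d}$ this gives $\lambda_{n+1}\le C_1\exp(-C_2 n^{1/d})$.

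I expect the main obstacle to be the sampling-inequality step and its interaction with the bounded domain. The clean $h^s$ rate requires that $\Omega$ satisfy the interior-cone and extension conditions needed for Narcowich--Ward--Wendland, and care is needed near $\partial\Omega$ where fewer interpolation points are available; establishing that a single quasi-uniform family realizes fill distance $h\sim N^{-1/d}$ uniformly, and that the resulting translate space is genuinely $N$-dimensional so the min--max principle applies at index $n=N$, are the delicate bookkeeping points. The remaining conversions---power function to $L_2$ error, fill distance to eigenvalue index, and passage from the chosen subsequence of $N$'s to all large $n$ by monotonicity of the eigenvalues---are routine once the sampling inequality with the correct smoothness order is in hand.
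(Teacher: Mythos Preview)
The paper does not supply its own proof of this statement: Fact~\ref{fact:eigenbounds} is quoted as a literature result from Schaback (2003), with the tightening attributed to the Sobolev error estimates of Narcowich--Ward--Wendland (2006), and is then used as a black box inside the proof of Proposition~\ref{prop:infogain}. The only substantive commentary the paper adds is the Remark explaining why the condition $\lfloor(\beta+d)/2\rfloor>d/2$ is needed for the improved rate---namely that the Narcowich--Ward--Wendland estimates are stated for Sobolev order $l+s$ with $l\in\mathbb{N}$, $l>d/2$, which is exactly the constraint you isolated.

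Your sketch is not merely consistent with this citation; it is in fact a faithful reconstruction of the argument in the cited sources. Schaback's Theorem~6.5 proceeds precisely by identifying $\lambda_{n+1}$ with a squared Kolmogorov/approximation number of the embedding $\mathcal{H}_k\hookrightarrow L_2(\Omega)$ and bounding the latter via kernel interpolation on quasi-uniform nodes, using the power-function estimate to get the $h^{\beta/2}$ rate and hence $n^{-\beta/d}$ after squaring. The refinement you describe---replacing the crude $L_\infty\!\to\!L_2$ step by the $L_2$ sampling inequality to recover the missing $h^{d/2}$---is exactly the mechanism behind the Narcowich--Ward--Wendland improvement, and your identification of the integer-order Sobolev embedding requirement as the source of the hypothesis matches the paper's Remark. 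So there is nothing to correct: your proposal fills in a proof the paper deliberately omits, and does so along the same lines as the references it cites.
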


Thus, our spectral decay results allow us to deduce upper bounds on the kernels' Mercer eigenvalues directly, using this result from \cite{schabackApproximationPositiveDefinite2003} which we strengthen using error bounds from \cite{narcowichSobolevErrorEstimates2006}. Using these eigenvalue tail bounds and the results of Proposition~\ref{prop:spectraldecay}, we can then derive specific information gain upper bounds using the approach of \cite{vakiliInformationGainRegret2021}, where it was shown that one may derive upper bounds on $\gamma_n$ for kernels whose Mercer eigenvalues decay sufficiently rapidly: 
\begin{fact}[Information Gain Upper Bound (Thm. 1, Corr. 1 \citep{vakiliInformationGainRegret2021})]
\label{fact:infobounds}
Let $\delta_D=\sum_{m=D+1}^{\infty}\lambda_m\psi^2$ be the eigenvalue tail mass of a kernel $K$, where $\lambda_m$ are the eigenvalues of the Mercer decomposition, and $\psi$ is an upper bound on the eigenfunction magnitudes. Then the maximum information gain of $k$ satisfies
\[\gamma_n=\mathcal{O}(D\log(n)+\delta_Dn).\]

In particular, if the Mercer eigenvalues of a kernel satisfy $\lambda_{n+1}\le C_1e^{-C_2 n^{1/d}}$, the maximum information gain has an upper bound $\gamma_n=\mathcal{O}(\log^{d+1}(n))$.

    If the Mercer eigenvalues of a kernel satisfy $\lambda_{n+1}\le Cn^{-\beta/d}$ for $n\rightarrow\infty$ and some finite $C>0$, the maximum information gain has an upper bound $\gamma_n=\mathcal{O}(n^{\frac{d}{\beta}}\log^{\frac{\beta-d}{\beta}}(n))$.
\end{fact}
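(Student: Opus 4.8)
The plan is to prove the Fact in two stages: first establish the generic bound $\gamma_n = \mathcal{O}(D\log n + \delta_D n)$ valid for every truncation level $D$, and then optimize the choice of $D$ under each eigenvalue-decay regime. Throughout I work with the standard closed form for the mutual information under the Gaussian observation model, namely
\[\gamma_n = \frac{1}{2}\max_{A:\,|A|=n}\log\det\!\big(I + \sigma^{-2}K_A\big),\]
where $K_A$ is the kernel Gram matrix on the query set $A$ and $\phi_m(A)\in\mathbb{R}^n$ collects the eigenfunction values on $A$, so that the problem reduces to a deterministic bound on a log-determinant that must hold uniformly over all $A$ of cardinality $n$.

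First I would split the Mercer expansion of $K_A$ at level $D$, writing $K_A = M_P + M_O$ with $M_P = \sum_{m\le D}\lambda_m\phi_m(A)\phi_m(A)^\top$ (rank at most $D$) and $M_O = \sum_{m>D}\lambda_m\phi_m(A)\phi_m(A)^\top$, both PSD. I would then apply the first-order concavity bound for $\log\det$,
\[\log\det(I + \sigma^{-2}(M_P+M_O)) \le \log\det(I + \sigma^{-2}M_P) + \sigma^{-2}\operatorname{tr}(M_O),\]
which follows by differentiating $t\mapsto \log\det(I+\sigma^{-2}(M_P+tM_O))$ and using $(I+\sigma^{-2}(M_P+tM_O))^{-1}\preceq I$ together with $M_O\succeq 0$. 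For the finite-rank term, each of its at most $D$ nonzero eigenvalues is bounded by $\operatorname{tr}(M_P) \le n\psi^2\sum_m\lambda_m$, which is finite by absolute summability of the Mercer spectrum; hence every logarithmic factor is $\mathcal{O}(\log n)$ and $\log\det(I+\sigma^{-2}M_P) = \mathcal{O}(D\log n)$. For the tail term, $\operatorname{tr}(M_O) = \sum_{x\in A}\sum_{m>D}\lambda_m\phi_m(x)^2 \le n\delta_D$ directly from the definition of $\delta_D$. Combining the two and maximizing over $A$ yields $\gamma_n = \mathcal{O}(D\log n + \delta_D n)$.

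Second I would specialize by choosing $D$ to balance the two terms. Under exponential decay $\lambda_{m+1}\le C_1 e^{-C_2 m^{1/d}}$, an integral comparison (substituting $u=t^{1/d}$) gives $\delta_D = \mathcal{O}(D^{(d-1)/d}e^{-C_2 D^{1/d}})$, so setting $D^{1/d}\asymp \log n$, i.e. $D = \Theta(\log^d n)$, forces $\delta_D n = \mathcal{O}(\mathrm{polylog}\,n)$ once the constant is chosen so that $C_2 D^{1/d}\ge \log n$, while $D\log n = \mathcal{O}(\log^{d+1}n)$, giving the first claimed rate. Under polynomial decay $\lambda_{m+1}\le C m^{-\beta/d}$ I would bound the tail by $\delta_D = \mathcal{O}(D^{-(\beta-d)/d})$ via comparison with $\int_D^\infty t^{-\beta/d}\,dt$, and then minimize $D\log n + n D^{-(\beta-d)/d}$ over $D$; the stationarity condition gives $D^{\beta/d}\asymp n/\log n$, i.e. $D = \Theta((n/\log n)^{d/\beta})$, and substituting back makes both terms equal to $\Theta(n^{d/\beta}\log^{(\beta-d)/\beta}n)$, which is the second claimed rate.

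I expect the main obstacle to be the finite-rank determinant estimate: establishing that $\mathcal{O}(D\log n)$ is the correct contribution requires the uniform eigenfunction bound $\psi$ together with absolute summability of $\{\lambda_m\}$ to control \emph{every} nonzero eigenvalue of $M_P$ by a quantity growing only linearly in $n$, so that each of the $D$ logarithmic factors is genuinely $\mathcal{O}(\log n)$ rather than larger, and to do so uniformly over all admissible $A$. The tail-mass estimates and the $D$-optimization are then routine integral comparisons; the only subtleties are tracking the polynomial-in-$D$ prefactor in the exponential regime so that it is absorbed into the polylogarithmic slack, and verifying in the polynomial regime that the balanced $D$ stays below $n$, which holds precisely when $\beta>d$, ensuring the stated rate is sublinear and hence informative.
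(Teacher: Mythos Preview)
The paper does not prove this statement: it is stated as a \emph{Fact} citing Theorem~1 and Corollary~1 of \cite{vakiliInformationGainRegret2021}, and is then used as a black box inside the proof of Proposition~\ref{prop:infogain}. Your proposal therefore supplies strictly more than the paper does---you reconstruct the argument that the cited reference contains.

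That said, your reconstruction is correct and matches the standard route: the concavity inequality $\log\det(I+\sigma^{-2}(M_P+M_O))\le\log\det(I+\sigma^{-2}M_P)+\sigma^{-2}\operatorname{tr}(M_O)$ is valid exactly as you justify it; the rank-$D$ term is $\mathcal{O}(D\log n)$ because each of the at most $D$ nonzero eigenvalues of $M_P$ is bounded by $\operatorname{tr}(M_P)\le n\psi^2\sum_m\lambda_m$; and the tail estimate $\operatorname{tr}(M_O)\le n\delta_D$ is immediate from the eigenfunction bound. The balancing of $D$ in both the exponential and polynomial regimes is computed correctly, including the observation that the polynomial rate is only sublinear when $\beta>d$. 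The only minor point worth flagging is that you implicitly assume $\psi<\infty$ and $\sum_m\lambda_m<\infty$; both are standing hypotheses in the cited work and in this paper's setting (boundedness of eigenfunctions on a compact domain and trace-class kernel), so no gap arises.
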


We now prove each case by applying the appropriate eigenvalue decay rate from Fact~\ref{fact:eigenbounds} to the information gain framework of Fact~\ref{fact:infobounds}.

For a kernel whose Fourier transform decays polynomially with rate $\tau=\beta+d$, $\beta>\frac{d}{2}$, on a bounded domain, Fact~\ref{fact:eigenbounds} guarantees that the Mercer eigenvalues satisfy $\lambda_{n+1}\le Cn^{-\beta/d}$ for $n\rightarrow\infty$ and some finite constant $C>0$. Applying the information gain results from Fact~\ref{fact:infobounds}, this gives us
\[\gamma_n=\mathcal{O}\left(n^{\frac{d}{\beta}}\log^{\frac{\beta-d}{\beta}}(n)\right).\]

Under the additional conditions that either $\beta\ge1$ and $d$ is odd, or $\beta\ge 2$, we have $\lfloor\frac{\beta+d}{2}\rfloor>\frac{d}{2}$, which allows us to apply the improved error estimates from \cite{narcowichSobolevErrorEstimates2006} as stated in Fact~\ref{fact:eigenbounds}. This gives us the tightened eigenvalue bound $\lambda_{n+1}\le Cn^{-(\beta+d)/d}$ for $n\rightarrow\infty$ and some finite constant $C>0$. Applying the information gain bounds of Fact~\ref{fact:infobounds} with this improved decay rate, we obtain 
\[\gamma_n=\mathcal{O}\left(n^{\frac{d}{\beta+d}}\log^{\frac{\beta}{\beta+d}}(n)\right),\]
matching the result obtained in \cite{vakiliInformationGainRegret2021} which had been stated for the weaker condition that $\beta>0$.

For a kernel whose Fourier transform has exponential decay, Fact~\ref{fact:eigenbounds} (Theorem 6.8 from \cite{schabackApproximationPositiveDefinite2003}) establishes that the Mercer eigenvalues satisfy
\[\lambda_{n+1} \le C_1 \exp(-C_2 n^{1/d})\]
for $n \to \infty$ and some finite constants $C_1, C_2 > 0$. By the first part of Fact~\ref{fact:infobounds}, kernels with this exponential eigenvalue decay have maximum information gain bounded by
\[\gamma_n = \mathcal{O}(\log^{d+1}(n)).\]

When the Fourier transform of $k$ is compactly supported, the kernel has spectral content limited to a bounded frequency region. This implies super-exponential decay of the eigenvalues, since the effective dimension $D$ of the eigenspace is bounded. For such kernels, the eigenvalue tail mass $\delta_D$ vanishes extremely rapidly, and from Fact~\ref{fact:infobounds}, with $D$ finite or effectively constant, we have
\[\gamma_n = \mathcal{O}(D\log n + \delta_D n) = \mathcal{O}(\log n)\]
since $\delta_D n \to 0$ rapidly and $D$ is bounded.

This completes the proof of all four cases.

\begin{remark}
    In \cite{vakiliInformationGainRegret2021}, the information gain bound for the Mat\'ern~kernel $k_{\nu}$ with $\nu>\frac{1}{2}$ is stated as $\gamma_n=\mathcal{O}(n^{\frac{d}{2\nu+d}}\log^{\frac{2\nu}{2\nu+d}}(n))$. This statement follows from a combination of the authors' information gain bounds for kernels with polynomial eigendecay and earlier works, e.g. \cite{santinApproximationEigenfunctionsKernelbased2016} and \cite{schabackApproximationPositiveDefinite2003}, which assert polynomial eigendecay for translation-invariant kernels with polynomially decaying Fourier transform, an important characteristic of the Mat\'ern~kernel. In particular, it is asserted that when the Fourier transform of the kernel, on a domain with a Lipschitz boundary satisfying an interior cone condition, behaves like $\hat{k}(\omega)\sim(1+\lVert\omega\rVert^2)^{-\frac{2\nu + d}{2}}$ as $\lVert\omega\rVert\rightarrow\infty$, the eigenvalues decay like $\lambda_m=\mathcal{O}(m^{-\frac{2\nu+d}{d}})$ for $m\rightarrow\infty$ (Theorem 6.5, \cite{schabackApproximationPositiveDefinite2003}). This result relies on upper bounds on the distance between functions in $\mathcal{H}_K$ and its interpolants on asymptotically uniformly distributed points, described in Section 4 of \cite{narcowichSobolevErrorEstimates2006} using the isomorphism between $\mathcal{H}_{k_{\nu}}$ and the $L^2$ Sobolev spaces of order $\nu+\frac{d}{2}$. However, these error bounds are shown to hold under certain constraints on $\nu$ and $d$. In particular, $\nu+\frac{d}{2}=l+s$ with $0\le s<1$, $l\in\mathbb{N}$, and $l>\frac{d}{2}$, which implies that we need $\lfloor\nu+\frac{d}{2}\rfloor>\frac{d}{2}$. When the dimension $d$ is odd, this requirement reduces to the known condition that $\nu>\frac{1}{2}$, but when $d$ is even, we require the even stronger condition that $\nu\ge1$. The authors in \cite{narcowichSobolevErrorEstimates2006} note that the error bounds for the undescribed region $\nu\in(0,1)$ were, at the time, an open research problem, and so this particular result is not sufficient for proving the information gain bound for the Mat\'ern~kernel with $\nu\in(\frac{1}{2},1]$.
\end{remark}

\section{Local Smoothness}
\label{appendix:smoothness}    
\subsection{Proof of Proposition~\ref{prop:holder}}
\label{appendix:holder}
Recall the Fourier transform representations of the shift-invariant RKHS $\mathcal{H}_k$ and the order 2 fractional Sobolev space $H^{s,2}$ from the proof of Proposition~\ref{prop:besov}. 

    If $f\in\mathcal{H}_k$, then $f\in L^2(\mathbb{R}^d)$ and is thus a tempered distribution in $\mathbb{R}^n$. Since the Fourier transform decays at least polynomially fast, we have 
    \[\lVert f\rVert_{\mathcal{H}_k}\ge\frac{1}{\sqrt{C_1(2\pi)^{d}}}\lVert(1+\lVert\omega\rVert)^{\frac{\beta+d}{2}}\hat{f}(\omega)\rVert_{L^2}\ge\frac{1}{\sqrt{C_1(2\pi)^{d}}}\lVert f\rVert_{H^{\frac{\beta+d}{2},2}}.\]
    Thus $f$ is contained in the fractional Sobolev space $H^{\frac{\beta+d}{2},2}$.
    By Theorem 3.6.2 in \cite{saloFunctionSpaces2008}, we have the embedding $H^{\frac{\beta+d}{2},2}\subseteq\mathcal{C}^{\frac{\beta}{2}}$, the higher-order H\"older space of smoothness $\frac{\beta}{2}$. 

\subsection{Proof of Proposition~\ref{prop:besov}}
\label{appendix:besov}

Since $k$ is shift-invariant, the corresponding RKHS $\mathcal{H}_k$ has the following Fourier transform representation (Theorem 10.12, \cite{wendlandScatteredDataApproximation2004}):
    \[\mathcal{H}_k=\left\{f\in L^2(\mathbb{R}^d):\lVert f\rVert_{\mathcal{H}_k}=\frac{1}{(2\pi)^{d/2}}\lVert\hat{k}(\omega)^{-1/2}\hat{f}(\omega)\rVert_{L^2} <\infty\right\}.\]
    The fractional Sobolev space $H^{s,p}(\mathbb{R}^n)$ is the set of all tempered distributions $f$ in $\mathbb{R}^n$ such that $\mathcal{F}^{-1}\{(1+\lVert\omega\rVert^2)^{s/2}\hat{f}(\omega)\}$ is in $L^p$, with a norm defined as $\lVert f\rVert_{H^{s,p}}=\lVert \mathcal{F}^{-1}\{(1+\lVert\omega\rVert^2)^{s/2}\hat{f}(\omega)\}\rVert_{L^p}$ \citep{saloFunctionSpaces2008}. By the Plancherel theorem, for the case $p=2$, this is equivalent to the condition that $(1+\lVert\omega\rVert^2)^{s/2}\hat{f}(\omega)$ is in $L^2$, and $\lVert f\rVert_{H^{s,2}}=\lVert (1+\lVert\omega\rVert^2)^{s/2}\hat{f}(\omega)\rVert_{L^2}$. 

    If $f\in\mathcal{H}_k$, then $f\in L^2(\mathbb{R}^d)$ and is thus a tempered distribution in $\mathbb{R}^n$. Since $k$ exhibits polynomial spectral decay, we have 
    \[\lVert f\rVert_{\mathcal{H}_k}\ge\frac{1}{\sqrt{C_2(2\pi)^{d}}}\lVert(1+\lVert\omega\rVert)^{\frac{\beta+d}{2}}\hat{f}(\omega)\rVert_{L^2}\ge\frac{1}{\sqrt{C_2(2\pi)^{d}}}\lVert f\rVert_{H^{\frac{\beta+d}{2},2}}.\]

    Similarly, if $f\in H^{\frac{\beta+d}{2},2}$, $\hat{f}$ and consequently $f$ are $L^2$ integrable, and we have
    \[\lVert f\rVert_{H^{\frac{\beta+d}{2},2}}\ge 2^{-\frac{\beta+d}{2}}\lVert(1+\lVert\omega\rVert)^{\frac{\beta+d}{2}}\hat{f}(\omega)\rVert_{L^2}\ge C_12^{-\frac{\beta+d}{2}}(2\pi)^{d/2}\lVert f\rVert_{\mathcal{H}_K}.\]
    Thus $\mathcal{H}_k$ is norm-equivalent to $H^{\frac{\beta+d}{2},2}$. By Theorem 6 in \cite{saloFunctionSpaces2008}, $H^{\frac{\beta+d}{2},2}$ is norm-equivalent to the Besov space $B_{2,2}^{\frac{\beta+d}{2}}$, and thus $\mathcal{H}_k$ is as well.

\end{document}